%% file: main.tex
\documentclass{article}
\usepackage{arxiv}

\usepackage[utf8]{inputenc}
\usepackage[T1]{fontenc}
\usepackage{mathptmx}
\usepackage{amsmath,amssymb,amsthm,bm}
\usepackage{booktabs,tabularx,graphicx,microtype,url,xcolor}
\usepackage{enumitem,placeins}
\usepackage{algorithm,algpseudocode}
\usepackage[numbers,sort&compress]{natbib}
\usepackage{hyperref}
\usepackage[nameinlink,capitalise,noabbrev]{cleveref}
\hypersetup{colorlinks=true,linkcolor=[rgb]{0.09,0.30,0.45},citecolor=[rgb]{0.09,0.30,0.45},
 urlcolor=[rgb]{0.09,0.30,0.45},
 pdftitle={Phase-HDC: Replacing Optimizer History with Gradient Thresholds in Discrete Phase Learning},
 pdfauthor={Ahmed Nebli}}

\newcommand{\PHDC}{Phase-HDC}
\newcommand{\ind}{\mathbf{1}}
\newcommand{\ZK}{\mathbb{Z}_K}
\newcommand{\loss}{\mathcal{L}}
\newcommand{\eps}{\varepsilon}
\DeclareMathOperator{\sign}{sign}

\DeclareMathOperator{\rnd}{round}
\DeclareMathOperator{\clip}{clip}
\newtheorem{proposition}{Proposition}
\newtheorem{definition}{Definition}
\crefname{equation}{Eq.}{Eqs.}
\Crefname{equation}{Eq.}{Eqs.}
\setlist[itemize]{leftmargin=1.3em,itemsep=.3em,topsep=.3em}
\setlist[enumerate]{leftmargin=1.5em,itemsep=.35em,topsep=.3em}
\algrenewcommand\algorithmicrequire{\textbf{Input:}}
\algrenewcommand\algorithmicensure{\textbf{Output:}}

\title{Phase-HDC: Replacing Optimizer History with Gradient Thresholds in Discrete Phase Learning}
\renewcommand{\shorttitle}{Phase-HDC: Replacing Optimizer History with Gradient Thresholds}
\date{}
\author{Ahmed Nebli\\
 Mathalyse Research\\
 \texttt{ahmed.nebli@mathalyse.com}}

\begin{document}
\maketitle

\begin{abstract}
Training a compact model often needs far more memory than storing it, because the optimizer keeps its own records of past gradients. For a hyperdimensional classifier whose learned parameters are low-bit angles, which we call a \emph{phase memory}, these records take several times more memory than the model itself. We ask whether such a model can be trained while storing nothing but the model. The proposed method, Phase-HDC, turns each stored angle by at most one step per update, against the sign of its current gradient, and only when that gradient is large enough. We show that this simple rule is the exact solution of a first-order loss model in which every changed parameter pays a fixed cost. When everything except the update rule is held fixed, Phase-HDC matches the accuracy of Adam with 6-bit moments while storing three times less. Across eleven image, tabular, and text datasets, it stores 16--23$\times$ less than standard float32 Adam and 4--6$\times$ less than 8-bit Adam. The price is an average loss of about five accuracy points against float32 Adam, while Phase-HDC is more accurate than 8-bit Adam on six of the eleven datasets, including byte-level text prediction, where 8-bit Adam collapses. Instrumented training runs explain these outcomes. Once parameters must sit on a discrete grid, Adam's moments mainly decide whether a parameter moves at all, a decision that a threshold on the current gradient can make without memory, and coarse quantization of the moments breaks this decision for inputs that the data rarely contain. The storage savings are logical state rather than measured hardware memory. Code available at: \url{https://github.com/ahmednebli/Phase-HDC}
\end{abstract}

\keywords{hyperdimensional computing \and vector-symbolic architectures \and quantized phases \and sign descent \and optimizer state \and low-precision training}

\section{Introduction}
\label{sec:intro}
Learning on a device with a small, fixed memory is constrained less by the size of the model than by the size of everything that must be kept while the model learns. Hyperdimensional computing (HDC) is attractive for such devices~\citep{kanerva2009}. An HDC classifier stores a table of high-dimensional vectors, the \emph{item memory}, that represent input positions and input values. It combines them with two cheap operations and compares the result with one vector per class. Early HDC systems used random item memories that required no training. Recent work shows that learning the item memory improves accuracy over random encoders~\citep{duan2022,kim2023,kim2024} and can reduce the number of dimensions that a task requires~\citep{dejonghe2026}.

Learning introduces a cost that the deployed model does not reveal. Adam~\citep{kingma2015}, a standard choice for training such encoders by backpropagation, keeps two running statistics for every parameter, and low-precision training usually adds a high-precision master copy of the parameters as well. These arrays persist from one minibatch to the next. We call the ratio between the persistent training state and the persistent model state the \emph{state overhead} $\Gamma$ (\cref{def:overhead}). For a model whose coordinates are 6-bit angles, Adam with float32 moments and a float32 master copy has $\Gamma=16$: a memory budget that comfortably holds the trained classifier is sixteen times too small to train it.

Existing responses to this cost keep the optimizer's history and make it smaller. Block-wise 8-bit optimizers quantize the moment arrays~\citep{dettmers2022}, and Adafactor replaces a matrix of second moments by row and column statistics~\citep{shazeer2018}. Optimizers for binary networks discard the master copy but still carry a momentum-like statistic~\citep{helwegen2019}. Work on quantized phase representations shows that an HDC algebra survives storing each coordinate as a small integer angle index~\citep{snyder2026}, yet it concerns a fixed representation, not the state required to learn one. To our knowledge, no prior method trains a learned discrete phase memory with $\Gamma=1$, and the accuracy price of doing so has not been measured.

This paper studies that operating point. We use a \emph{phase memory}: every position, value, and class coordinate is an integer $q\in\{0,\ldots,K-1\}$ that represents the angle $2\pi q/K$, with $K=2^k$ for a bit width $k$. A useful picture is a bank of rotary knobs, each with $K$ notches. A training step may turn a knob one notch clockwise, one notch counterclockwise, or leave it still, and turning past the last notch returns to the first. Because the knob has no position between notches, the update needs no hidden continuous value. The proposed method, \PHDC{}, reads the direction of each turn from the sign of the current minibatch gradient and turns a knob only when the gradient magnitude exceeds a threshold. It keeps no other state. \Cref{fig:method} shows the complete computation, rendered from a trained model. The rule is not a heuristic: it is the exact solution of a first-order loss model in which every changed coordinate pays a fixed write cost.

The question is then what this simplicity costs, and the answer depends on the competitor. We answer it with five lines of study of decreasing control (\cref{tab:evidence}). A controlled experiment on handwritten digits fixes the phase grid, classifier head, initialization, data split, and validation-only tuning, so that only the update rule and its stored history differ. Screening studies on the same digits place \PHDC{} among eight training systems, test an alternative item memory, and simulate fixed-point gradient arithmetic. An evaluation on six public image and tabular datasets measures the trade-off on harder tasks, and a character-level study on five text corpora extends it to next-symbol prediction with vocabularies of up to 256 symbols. Instrumented replays of the controlled runs record how often and how far each method moves each coordinate, which turns several questions about mechanisms into measurements. The picture that emerges has three parts, which are our contributions:
\begin{enumerate}
\item \textbf{A moment-free update with an exact local justification.} \PHDC{} trains a learned phase memory with $\Gamma=1$. Its thresholded one-notch step exactly minimizes a linearized loss with a per-coordinate write penalty (\cref{prop:local}), and the phase width couples the angular step and the effective threshold in a way we make explicit (\cref{eq:precision}).
\item \textbf{A measured accuracy and memory trade-off.} With identical 6-bit parameters and heads, \PHDC{} matches the mean accuracy of projected Adam with 6-bit moments at one third of its state. Across eleven image, tabular, and text datasets, it keeps $16$ to $23\times$ less state than Adam with continuous float32 parameters, at an average cost of $5.3$ accuracy points ($7.0$ on the image and tabular datasets, $3.3$ on text). It keeps $4$ to $6\times$ less state than 8-bit Adam and is more accurate on six of the eleven datasets, including byte-level enwik8, where 8-bit Adam collapses to $0.8\%$.
\item \textbf{A mechanistic account of what optimizer state buys.} On a phase grid, Adam's moments act mainly as a history-based gate that decides whether a coordinate moves; small learning rates close it completely, and per-tensor moment quantization breaks it for rarely observed input values (Spearman rank correlation $-0.98$ between visit frequency and large-jump rate), a mechanism consistent with the collapses of quantized Adam on images and on byte-level text. The accuracy advantage of float32 Adam comes mainly from its continuous parameters, not its moments. The phase width matters through the update as much as through the representation, and the update tolerates 4-bit gradient arithmetic in simulation.
\end{enumerate}

\begin{figure}[t]
\centering
\includegraphics[width=\textwidth]{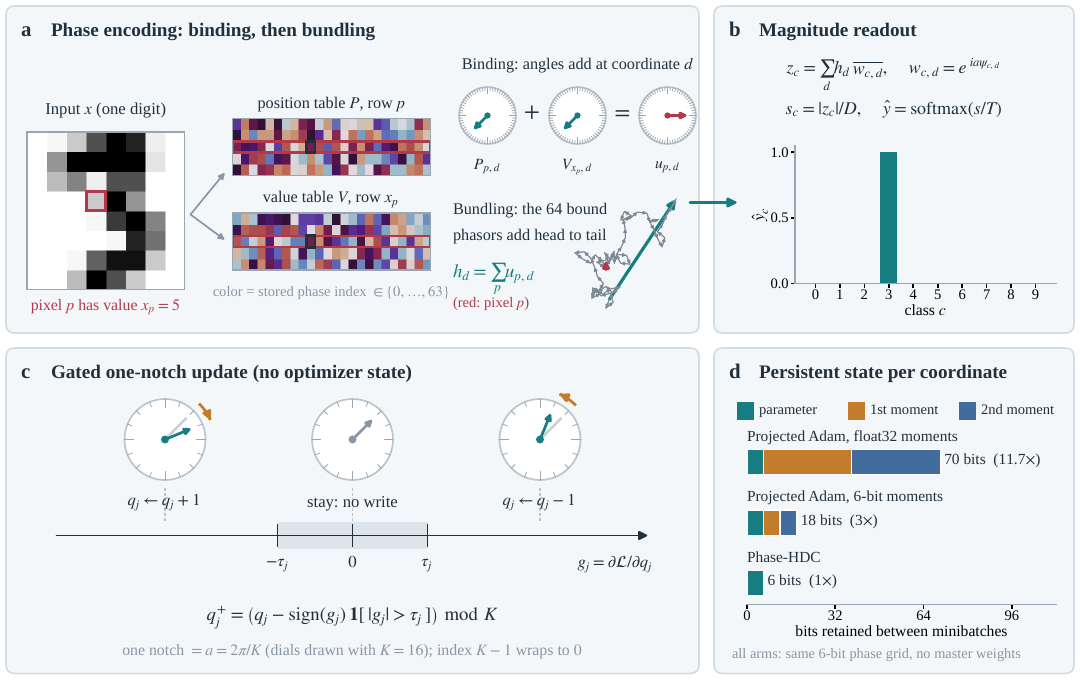}
\caption{\textbf{Overview of \PHDC{}, rendered from the trained digit model (seed 1).} (a)~An input digit selects one row of the learned position table $P$ for each pixel $p$ and one row of the shared value table $V$ for its value $x_p$. Colors show stored phase indices. At each coordinate $d$, binding adds the two angles; bundling then sums the resulting unit phasors over all 64 pixels head to tail, giving the complex coordinate $h_d$ (teal). (b)~The bundle is compared with unit-modulus class vectors, and the magnitude of each overlap enters a softmax. The bars are the trained model's class probabilities for this input. (c)~Training moves a stored index by at most one notch per step: against the gradient sign when $|g_j|>\tau_j$, and not at all inside the dead zone. Dials are drawn with $K=16$ so that one notch is visible; the model uses $K=64$. (d)~Bits retained per coordinate between minibatches for methods that share the same 6-bit phase parameters. \PHDC{} keeps only the parameter. Values are logical array state, excluding activations, gradients, and quantizer metadata.}
\label{fig:method}
\end{figure}

The paper is organized as follows. \Cref{sec:background} introduces phase vectors and optimizer state for readers new to either. \Cref{sec:related} positions the work, \cref{sec:method} derives the method, \cref{sec:setup,sec:results} describe the studies and their results, \cref{sec:discussion} explains the mechanisms behind them, and \cref{sec:limitations} states the main limitations with the work that would remove them.

\input{background}
\input{related}
\input{method}
\input{experiments}
\input{discussion}

\section{Conclusion}
\label{sec:conclusion}
A learned phase memory can be trained without any optimizer history. \PHDC{} turns each stored phase by at most one notch, in the direction opposite to the current gradient and only when the gradient is large enough, and this rule is the exact minimizer of a write-penalized first-order model. The results describe a clear trade-off. When parameters, head, and tuning protocol are identical, removing Adam's moments costs no measurable accuracy and saves two thirds of the persistent state. Against Adam with continuous float32 parameters, the same model keeps 16 to 23 times less state across eleven image, tabular, and text datasets, and loses 5.3 accuracy points on average. Against 8-bit Adam it keeps 4 to 6 times less state and is more accurate on six of the eleven datasets. At four bits the saving on digits reaches 26.6 times, but the accuracy cost grows to 11.7 points.

The mechanisms behind these numbers are specific. On a discrete grid, Adam's moments serve mainly to decide which coordinates move; a magnitude threshold on the current gradient makes a comparable decision without memory but loses Adam's invariance to gradient scale. The accuracy that float32 Adam retains comes mainly from its continuous parameters. Coarse per-tensor moment quantization fails on parameter rows that the data visit rarely, and the phase width acts on the update as strongly as on the representation. Moment-free training is therefore an operating point on the accuracy and memory frontier rather than a replacement for adaptive optimization. Whether that point is useful in practice depends on a packed integer implementation that converts logical state savings into measured memory, and on controlled comparisons against well-engineered compressed optimizers at larger scale.

\bibliographystyle{plainnat}
\bibliography{references}

\clearpage
\appendix
\input{appendix}
\end{document}

%% file: background.tex
\section{Background: Phase Vectors and Optimizer State}
\label{sec:background}
This section introduces the two ingredients of the paper for readers outside hyperdimensional computing and optimization. The first is a classifier whose parameters are angles. The second is the memory that an optimizer must carry while it trains such a classifier. Readers familiar with both can continue to \cref{sec:related}.

\subsection{Hyperdimensional Classification with Phase Vectors}
\label{sec:bg-hdc}
A hyperdimensional classifier represents everything it knows as long vectors of the same length $D$, typically hundreds or thousands of coordinates~\citep{kanerva2009}. Classification proceeds in three steps. First, the classifier looks up one vector for each part of the input: for an image, one vector for every pixel position and one vector for every possible pixel value. The table that holds these vectors is the \emph{item memory}. Second, it combines the looked-up vectors into a single vector that describes the whole input. Third, it compares this description with one stored vector per class and predicts the most similar class.

Two operations do the combining. \emph{Binding} attaches a value to a position, for example ``value 5 at pixel 27''. It must produce a vector that is distinct from both of its inputs, so that ``5 at pixel 27'' and ``5 at pixel 28'' are not confused. \emph{Bundling} superposes all bound pairs of an input into one vector that remains similar to each of them, much as a chord remains similar to each of its notes. With these two operations a fixed-size vector can describe an input of any size.

In a phase representation, every coordinate is an angle, which we picture as the hand of a clock. Binding adds angles coordinate by coordinate, and bundling adds the corresponding unit-length arrows head to tail. A small example makes this concrete. Suppose angles are stored with eight notches, so one notch is $45^\circ$. If the position vector of pixel 27 holds notch 2 ($90^\circ$) at some coordinate and the value vector of value 5 holds notch 3 ($135^\circ$) there, then their binding holds notch 5 ($225^\circ$) at that coordinate. Bundling then adds the arrows of all pixels at that coordinate. If many pixels point in similar directions, the sum is long; if they point in all directions, they largely cancel. The class score measures how well this bundle lines up with a class vector. \Cref{fig:method}a shows a real bundle of 64 pixels.

Phase representations are attractive for small devices because an angle with $K=2^k$ notches costs only $k$ bits, binding is integer addition modulo $K$, and the cosine and sine of $K$ angles fit in a small lookup table~\citep{snyder2026}. A 6-bit phase, for instance, distinguishes 64 directions. When the item memory and the class vectors are learned rather than random, each stored angle becomes a trainable parameter.

\subsection{Persistent Training State and Its Sources}
\label{sec:bg-state}
Training adjusts parameters in small steps computed from a minibatch gradient. Between two minibatches, the training procedure must keep certain arrays in memory. We call these arrays the \emph{persistent state}. Activations and gradients are not part of it, because they are recomputed for every minibatch and can be discarded afterwards. Three kinds of arrays typically persist.

The first is the parameters themselves. The second is a \emph{latent} or \emph{master} copy of the parameters in higher precision. Low-precision networks usually keep one, because a small update rounded to a coarse grid is erased, whereas a high-precision copy can accumulate many small updates until they cross a grid point~\citep{gupta2015,helwegen2019}. The third is the optimizer's memory of past gradients. Adam~\citep{kingma2015} keeps two such arrays. Its first moment is a running average of recent gradients and acts like the velocity of a rolling ball, carrying the update in a consistent direction. Its second moment is a running average of squared gradients and acts like a per-parameter volume control, shrinking steps where gradients are large or erratic and enlarging them where gradients are small.

\begin{definition}[State overhead]
\label{def:overhead}
The state overhead $\Gamma$ of a training method is the ratio between its persistent state and the persistent state of the trained model it produces.
\end{definition}

The overhead can be large. Consider a model of $10^6$ coordinates stored as 6-bit phases, which occupies 0.75~MB. Training it with Adam, float32 moments, and a float32 master copy keeps 96 bits per coordinate, 12~MB in total, so $\Gamma=16$. Without the master copy but with float32 moments, $\Gamma=70/6\approx11.7$. With 6-bit moments, $\Gamma=3$. Each of these configurations still stores optimizer history. The method of this paper stores none, so $\Gamma=1$: a device that can hold the trained model can hold everything needed to continue training it.

The removal of the master copy has a consequence that shapes the whole paper. Without it, every update must land exactly on a notch, so an optimizer can only turn a parameter by a whole number of notches or leave it unchanged. An update rule for such parameters therefore makes two decisions at every step: whether to move a coordinate, and in which direction. \Cref{sec:method} derives a rule that makes both decisions from the current gradient alone, and \cref{sec:discussion-gating} shows that Adam, when forced onto the same notches, makes the same two decisions but bases the first one on its stored history.

%% file: related.tex
\section{Related Work}
\label{sec:related}
The question studied here sits between three lines of work. Phase-based vector-symbolic representations supply the model. Trainable HDC encoders show that learning the model is worthwhile. Memory-efficient and sign-based optimizers address the state that learning requires. We discuss each line with respect to one question: how much persistent state does it need beyond the representation itself, and why?

\subsection{Phase-Based Vector-Symbolic Representations}
HDC and vector-symbolic architectures represent structured objects by distributed vectors and manipulate them with a small algebra~\citep{kanerva2009}. Binding associates two vectors, for example a position with the value observed there, and produces a vector dissimilar to both. Bundling superposes several vectors into one that remains similar to each of them. Holographic reduced representations implement binding by circular convolution of real vectors~\citep{plate1995}. In the Fourier domain, the same operation becomes coordinate-wise multiplication of unit complex numbers, that is, addition of their angles. This phase view underlies the model in \cref{sec:method}: binding adds two stored angles, and bundling adds the resulting complex numbers.

Fractional binding extends the phase algebra to continuous quantities. Spatial semantic pointers encode a coordinate by raising a fixed base vector to a real power, so that nearby locations receive similar codes~\citep{komer2019}. Such codes tie all values of an attribute to one frequency vector. Our model does not impose this tie. It learns a separate phase vector for every position and every value, which gives the optimizer more freedom and makes each table entry an independent parameter.

The precision of phase codes has been studied directly. qFHRR stores each coordinate as a discrete phase index and implements binding, unbinding, bundling, and similarity with modular integer arithmetic and lookup tables, showing that the algebra retains its fidelity at low bit widths~\citep{snyder2026}. This result is the representational precedent for our phase memory. It leaves open the problem addressed here: a quantized phase code that is \emph{learned} by gradient descent needs an update rule, and that rule determines how much state must accompany the code during training.

Complex-valued models with magnitude-based readouts also appear outside HDC. A sequence model by the author evolves a complex state with unitary dynamics and reads it out through squared amplitudes, as in the Born rule of quantum mechanics~\citep{nebli2026}. A subsequent study, \emph{The Quantum Shortcut}, evaluates complex phase-state models with state-space and attention backbones and reports fewer optimization steps to matched validation losses, with the attention advantage confined to early training~\citep{nebli2026shortcut}. That work concerns sequence-model dynamics; our focus is the persistent state required to train discrete phase memories. The present classifier shares only the use of complex magnitudes. It has no recurrence, its class score is the modulus of an overlap rather than a probability, and that score enters an ordinary softmax.

\subsection{Trainable Hyperdimensional Encoders}
Fixed random item memories make HDC training cheap but inefficient: a random encoder needs many dimensions to separate classes that a learned encoder separates with few. Several methods therefore learn the encoder. ManiHD adapts the encoding through a trainable manifold construction~\citep{zou2021}. LeHDC casts HDC classification as a binary neural network and trains it with a principled learning procedure~\citep{duan2022}. TrainableHD propagates classification error into the encoding vectors and combines training with quantization-aware inference~\citep{kim2023}; its extension adds adaptive optimization to this pipeline~\citep{kim2024}. THDC learns embeddings and a binary classification layer jointly by backpropagation and reports large reductions in the dimension needed on image benchmarks~\citep{dejonghe2026}.

These methods establish that learned item memories are accurate, and we do not claim learning itself as a contribution. Their training procedures, however, rely on continuous latent parameters, adaptive moment estimates, or both. Their reported memory figures describe the deployed model. The persistent state needed while training is larger and is typically not reported. For an edge device that must learn in place, this unreported state determines feasibility. Our experiments use a different encoder and readout from these systems, so their published accuracies are context rather than baselines.

\subsection{Compressed Optimizer State and Low-Precision Training}
A direct response to optimizer memory is to keep the adaptive rule and compress its statistics. Dettmers et al.\ quantize Adam's moments to 8 bits with block-wise dynamic quantization, in which every small block of a tensor receives its own range, and add mechanisms for stability~\citep{dettmers2022}. Block-wise ranges matter because moment tensors span many orders of magnitude, and a single range for a whole tensor wastes its levels on a few extreme entries. Our quantized-Adam baselines use the simpler per-tensor affine construction. \Cref{sec:discussion-quantization} shows that this choice causes a specific, measurable failure, so the baselines characterize that construction rather than 8-bit optimization in general. Adafactor reduces state differently: it approximates a matrix of second moments by the outer product of row and column statistics~\citep{shazeer2018}. Both approaches lower the state overhead while preserving some per-parameter history, so neither reaches $\Gamma=1$.

Low-precision training research also shows that rounding a small update onto a coarse grid can erase it. Gupta et al.\ find that fixed-point training succeeds at low precision only with stochastic rounding, because round-to-nearest discards every update smaller than half a quantization step~\citep{gupta2015}. This effect is central to the projected-Adam baseline in \cref{sec:setup}, which rounds every candidate step onto the phase grid without a continuous accumulator. WAGE goes further and discretizes weights, activations, gradients, and errors to train with integers end to end~\citep{wu2018}. Our study does not implement such a pipeline. It computes gradients in floating point and counts the persistent state that an integer implementation would have to keep.

\subsection{Sign-Based and Latent-Free Discrete Optimization}
Sign-based methods supply the update direction we use. signSGD updates each parameter by the sign of its stochastic gradient and analyzes when this compressed direction makes progress, in terms of gradient noise and curvature~\citep{bernstein2018}. Balles and Hennig decompose Adam itself into a sign component and a variance-dependent magnitude, interpreting the ratio of first moment to root second moment as a per-coordinate signal-to-noise estimate~\citep{balles2018}. This decomposition becomes concrete on a discrete grid: \cref{sec:discussion-gating} shows that projected Adam's moments decide mainly \emph{whether} a coordinate moves, which is precisely the role of the threshold in \PHDC{}.

Methods for binary networks show that discrete weights can be optimized without continuous latent copies. Bop flips a binary weight when an exponential moving average of its gradient exceeds a threshold, and argues that latent real-valued weights are better understood as inertia than as parameters~\citep{helwegen2019}. Bop removes the latent weight but keeps the moving average, so its state overhead remains above one. Lion also applies a sign update and keeps one momentum array~\citep{chen2023}. ThermoLion, by the author, blends sign and magnitude updates according to an estimate of gradient reliability that it maintains across steps~\citep{nebli2025}. These optimizers are the closest competitors in spirit, and each retains at least one array of per-parameter history.

\PHDC{} differs from all of them in what it stores and in the domain it acts on. It thresholds the instantaneous minibatch gradient and stores nothing else. Its parameters live on a circle, so a step that leaves the last index re-enters at the first, and the step has a fixed angular size set by the bit width. Its threshold has an exact interpretation as a write cost. Neither the sign direction nor latent-free discrete optimization is new; the contribution is their combination on learned circular coordinates, the analysis of that combination, and a controlled measurement of what removing all optimizer history costs.

%% file: method.tex
\section{Phase-HDC: A Moment-Free Update for Phase Memories}
\label{sec:method}
This section formalizes the phase classifier of \cref{sec:bg-hdc}, derives the update rule, and states how the bit width and the persistent state enter the method. \Cref{tab:notation} in \cref{app:notation} collects all symbols.

\subsection{Phase Memory Classifier}
\label{sec:model}
An input is a vector of discretized feature values $x=(x_1,\ldots,x_F)$ with $x_p\in\{0,\ldots,B-1\}$, where $F$ is the number of positions and $B$ the number of value bins. Let $D$ denote the representation dimension, $C$ the number of classes, $k\geq2$ the phase bit width, $K=2^k$ the number of phase levels, and $a=2\pi/K$ the angle of one notch. We write $\ZK=\{0,\ldots,K-1\}$ with addition modulo $K$.

\begin{definition}[Phase memory]
A phase memory is a triple of integer tables: a position table $P\in\ZK^{F\times D}$, a value table $V\in\ZK^{B\times D}$ shared by all positions, and a class table $\psi\in\ZK^{C\times D}$. Entry $q$ of any table represents the angle $aq$.
\end{definition}

The bound phasor of position $p$ at coordinate $d$ and the bundled input representation are
\begin{equation}
 u_{p,d}(x)=\exp\!\bigl[i a\,(P_{p,d}+V_{x_p,d})\bigr],
 \qquad h_d(x)=\sum_{p=1}^{F}u_{p,d}(x),
 \qquad d=1,\ldots,D.
 \label{eq:encoding}
\end{equation}
Binding is the addition of two stored angles inside the exponential, so no explicit modular reduction is needed in the forward map. Bundling sums complex numbers \emph{after} the exponential. The order matters: summing all angles first and exponentiating once would define a different and much less expressive encoder. \Cref{fig:method}a shows a bundle as the head-to-tail sum of the $F$ unit phasors at one coordinate.

Class $c$ is represented by the unit-modulus vector $w_c\in\mathbb{C}^D$ with $w_{c,d}=\exp(i a\psi_{c,d})$. The classifier scores and class probabilities are
\begin{equation}
 z_c=\sum_{d=1}^{D}h_d\,\overline{w_{c,d}},\qquad
 s_c=\frac{\sqrt{|z_c|^2+\eps}}{D},\qquad
 \hat y_c=\frac{\exp(s_c/T)}{\sum_{c'=1}^{C}\exp(s_{c'}/T)},
 \label{eq:readout}
\end{equation}
with $\eps=10^{-12}$ and temperature $T=0.3$. Training minimizes the mean cross-entropy $\loss=-\frac{1}{|\mathcal{B}|}\sum_{(x,y)\in\mathcal{B}}\log\hat y_y(x)$ over a minibatch $\mathcal{B}$ of labeled examples. The score $s_c$ measures how strongly the bundle aligns with class $c$ regardless of a global phase offset. It is neither a cosine similarity nor a normalized overlap, because $h$ does not have unit modulus and its scale is retained.

The integer tables define a differentiable function once each index is read as a real number. The extension of $P$, $V$, and $\psi$ to real arguments in \cref{eq:encoding,eq:readout} gives a smooth loss that is periodic with period $K$ in every coordinate. Its gradient can be evaluated at any integer state. The update below uses this gradient only to choose among integer moves, so it never differentiates through a rounding operation and never stores a continuous copy of the parameters.

\subsection{Write-Penalized One-Notch Update}
\label{sec:update}
Concatenate the three tables into one vector $q\in\ZK^N$ with $N=D(F+B+C)$, and let $g=\nabla_q\loss(q)\in\mathbb{R}^N$ be the gradient of the continuous extension on the current minibatch. A one-notch move changes coordinate $j$ by $\delta_j\in\{-1,0,1\}$. We assign a nonnegative cost $\tau_j$ to every coordinate that changes, and choose the move that minimizes the first-order change in loss plus the total write cost:
\begin{equation}
 \min_{\delta\in\{-1,0,1\}^N}\;
 g^\top\delta+\sum_{j=1}^{N}\tau_j\,\ind[\delta_j\neq0].
 \label{eq:local}
\end{equation}
The write cost is a modeling device that expresses a preference for leaving a coordinate unchanged unless the predicted gain justifies the change. It is not a measured energy.

\begin{proposition}[Exact local update]
\label{prop:local}
With ties resolved in favor of no change, a minimizer of \cref{eq:local} is
\begin{equation}
 \delta_j^\star=-\sign(g_j)\,\ind\bigl[\,|g_j|>\tau_j\,\bigr],\qquad
 q_j^{+}=\bigl(q_j+\delta_j^\star\bigr)\bmod K,\qquad j=1,\ldots,N.
 \label{eq:update}
\end{equation}
\end{proposition}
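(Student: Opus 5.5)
The plan is to exploit separability. The objective in \cref{eq:local} is a sum over coordinates, $\sum_{j=1}^N \phi_j(\delta_j)$ with $\phi_j(\delta_j)=g_j\delta_j+\tau_j\ind[\delta_j\neq0]$. The feasible set $\{-1,0,1\}^N$ is a Cartesian product, so a vector $\delta$ minimizes the sum exactly when each $\delta_j$ minimizes $\phi_j$ over $\{-1,0,1\}$. I would state this reduction first. It follows because no term couples coordinates, so any coordinatewise improvement lowers the total.

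Next I would solve the scalar problem by enumerating the three choices: $\phi_j(0)=0$, $\phi_j(1)=g_j+\tau_j$, and $\phi_j(-1)=-g_j+\tau_j$. The best nonzero move is $\delta_j=-\sign(g_j)$, with value $\tau_j-|g_j|$. When $g_j=0$ both nonzero moves give $\tau_j\ge 0$. Comparing with $\phi_j(0)=0$ gives three cases.
\begin{itemize}
\item If $|g_j|>\tau_j$, the move $-\sign(g_j)$ is strictly better than staying. Here $g_j\neq0$, because $\tau_j\ge0$, so $\sign(g_j)$ is well defined.
\item If $|g_j|<\tau_j$, staying is strictly optimal.
\item If $|g_j|=\tau_j$, the two values tie, and the stated convention selects $\delta_j=0$.
\end{itemize}
The case $g_j=0$ falls under the last two cases and also yields $0$. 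This matches $\delta_j^\star=-\sign(g_j)\ind[|g_j|>\tau_j]$ in every case.

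The last step handles the map $q_j^+=(q_j+\delta_j^\star)\bmod K$. The linearized objective is defined on the real extension, whose loss has period $K$ in each coordinate (\cref{sec:model}). Reducing modulo $K$ therefore represents the same point of the circle and does not change the loss. It just returns the index to $\ZK$.

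The only delicate points are bookkeeping. The sign convention must be pinned down, with $\sign(0)$ harmless because the indicator vanishes whenever $g_j=0$. The tie rule must be applied coordinatewise, so that the resulting minimizer is unique. I do not expect a real obstacle. The main care is to say that the proposition asserts optimality for the first-order model \cref{eq:local} only, not for the true loss.
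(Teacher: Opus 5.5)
Your proposal is correct and follows the paper's proof: you reduce to coordinatewise minimization by separability, compare the zero move (cost $0$) against the best nonzero move (cost $\tau_j-|g_j|$, attained at $-\sign(g_j)$), and resolve ties toward no change. Your additional bookkeeping for $g_j=0$ and for the reduction modulo $K$ is harmless and slightly more careful than the paper's argument.
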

\begin{proof}
The objective is a sum of independent terms $g_j\delta_j+\tau_j\ind[\delta_j\neq0]$, so it is minimized coordinate by coordinate. The cost of leaving coordinate $j$ unchanged is zero. The two nonzero moves cost $\tau_j+g_j$ and $\tau_j-g_j$, and the smaller of the two is $\tau_j-|g_j|$, attained by the move $-\sign(g_j)$. This move is strictly better than staying exactly when $|g_j|>\tau_j$. Together, the coordinate-wise minimizers form a global minimizer of \cref{eq:local}.
\end{proof}

A numerical example shows the rule at work. With $\tau_j=3\times10^{-5}$, a coordinate whose gradient is $-2\times10^{-4}$ turns one notch forward, because moving forward is predicted to lower the loss by more than the write cost; a coordinate whose gradient is $10^{-5}$ stays where it is, because the predicted gain does not pay for the write. We use one threshold $\tau_{\mathrm{enc}}$ for the encoder tables $P$ and $V$ and one threshold $\tau_{\mathrm{cls}}$ for the class table $\psi$. Algorithm~\ref{alg:phasehdc} states the resulting training step. Three properties of \cref{prop:local} deserve emphasis. First, with $\tau_j=0$ the rule reduces to the familiar sign step, which solves the linearized problem over an $\ell_\infty$ ball restricted to integer moves. Second, a threshold is not a budget: the number of coordinates that change is decided by the gradient, not fixed in advance. Under the alternative constraint $\|\delta\|_0\leq R$, the solution instead moves the $R$ coordinates with the largest nonzero $|g_j|$; that budgeted variant is not evaluated here. Third, the rule depends only on the current gradient, so the persistent state between minibatches is the phase memory itself.

\begin{algorithm}[t]
\caption{One \PHDC{} training step}
\label{alg:phasehdc}
\begin{algorithmic}[1]
\Require phase memory $q=(P,V,\psi)\in\ZK^N$, minibatch $\mathcal{B}$, thresholds $\tau_{\mathrm{enc}},\tau_{\mathrm{cls}}$
\State Compute $\hat y(x)$ for $x\in\mathcal{B}$ with \cref{eq:encoding,eq:readout} \Comment{floating-point forward pass}
\State $g\gets\nabla_q\loss(q)$ of the continuous extension \Comment{floating-point backward pass}
\For{$j=1,\ldots,N$}
 \State $\tau_j\gets\tau_{\mathrm{cls}}$ if $j$ indexes $\psi$, else $\tau_{\mathrm{enc}}$
 \If{$|g_j|>\tau_j$}
  \State $q_j\gets(q_j-\sign(g_j))\bmod K$ \Comment{one notch against the gradient}
 \EndIf
\EndFor
\Ensure updated $q$; no other quantity is retained for the next step
\end{algorithmic}
\end{algorithm}

The proposition concerns a first-order model of one minibatch loss. To state its reach precisely, suppose the continuous extension has a $\Lambda$-Lipschitz gradient along the segment from $q$ to $q+\delta^\star$, and let $\mathcal{A}=\{j:|g_j|>\tau_j\}$ be the set of active coordinates. Since $\|\delta^\star\|_2^2=|\mathcal{A}|$, the standard descent lemma gives
\begin{equation}
 \loss(q+\delta^\star)-\loss(q)\;\leq\;-\sum_{j\in\mathcal{A}}|g_j|+\frac{\Lambda}{2}\,|\mathcal{A}|,
 \label{eq:descent}
\end{equation}
where periodicity identifies the endpoint with its representative modulo $K$. The minibatch loss decreases whenever the active gradient mass exceeds the curvature term. Every active coordinate satisfies $|g_j|>\tau_j$, so a threshold above $\Lambda/2$ would guarantee this condition. We do not estimate $\Lambda$ or enforce this condition, and the bound concerns one minibatch, not the population loss. \Cref{prop:local} therefore justifies the form of the step without implying monotone training or convergence.

\subsection{Coupling of Phase Width, Step Size, and Gate}
\label{sec:precision}
The bit width $k$ fixes both the alphabet of the representation and the geometry of its update. Let $\theta=aq$ denote a coordinate measured in radians, and write $g_q=\partial\loss/\partial q$ and $g_\theta=\partial\loss/\partial\theta$ for one coordinate. The chain rule gives
\begin{equation}
 g_q=a\,g_\theta,\qquad
 |g_q|>\tau\;\Longleftrightarrow\;|g_\theta|>\tau/a,\qquad
 |\Delta\theta|=a\ \text{for every active coordinate}.
 \label{eq:precision}
\end{equation}
Raising $k$ from 6 to 8 therefore has two effects at once. Each step becomes four times smaller in angle, and a fixed index-space threshold becomes four times stricter in angular units. In the knob picture, adding notches makes every turn finer and also makes the knob harder to turn. A threshold rescaled as $\tau=a\tau_\theta$ keeps the angular gate fixed at a given state but still changes the step size. A comparison across bit widths that uses one threshold grid, as the sweep in \cref{sec:results-width} does, therefore varies the representation, the step, and the gate together. \Cref{sec:discussion-width} returns to this coupling and adds a third effect that is specific to shared tables.

\subsection{Persistent-State Accounting}
\label{sec:accounting}
We count the ideal packed bits held in parameter and per-coordinate optimizer arrays between minibatches, which is the persistent state of \cref{sec:bg-state}, and express it through the state overhead $\Gamma$ of \cref{def:overhead}.

For \PHDC{} with $N$ coordinates of $k$ bits,
\begin{equation}
 M_{\mathrm{phase}}=\frac{kN}{8}\ \text{bytes},\qquad \Gamma_{\mathrm{phase}}=1.
 \label{eq:memoryphase}
\end{equation}
Projected Adam with the same $k$-bit parameters and $b$-bit first and second moments (\cref{sec:setup-controlled}) stores
\begin{equation}
 M_{\mathrm{proj}}=\frac{(k+2b)N}{8}\ \text{bytes},\qquad
 \Gamma_{\mathrm{proj}}=\frac{M_{\mathrm{proj}}}{M_{\mathrm{phase}}}=1+\frac{2b}{k}.
 \label{eq:memorycontrolled}
\end{equation}
At $k=b=6$ the overhead is exactly 3. Moments of 8 bits and float32 give $11/3\approx3.67$ and $35/3\approx11.67$.

The Adam baselines of the exploratory evaluation differ in one further respect. Their class vectors have free real and imaginary parts, which requires $2CD$ real parameters instead of $CD$ phase indices. With $N_E=D(F+B)$ encoder coordinates and $N_C=DC$ class coordinates, an Adam arm that stores parameters and both moments at $b$ bits holds
\begin{equation}
 M_{\mathrm{Adam},b}=\frac{3b\,(N_E+2N_C)}{8}\ \text{bytes},\qquad
 \frac{M_{\mathrm{Adam},b}}{M_{\mathrm{phase}}}
 =\frac{3b}{k}\cdot\frac{N_E+2N_C}{N_E+N_C}.
 \label{eq:memoryarchive}
\end{equation}
The first factor is the cost of storing moments; the second is the cost of the unconstrained class head, which grows with the share of class coordinates in the model. These formulas exclude gradients, activations, temporary dequantized arrays, scalar quantizer ranges, lookup tables, and allocator overhead. They describe what a packed implementation must retain, not what the floating-point simulation in this paper allocates.

%% file: experiments.tex
\section{Experimental Design}
\label{sec:setup}
The evidence in this paper comes from five lines of study that answer different questions and carry different weight. \Cref{tab:evidence} lists them. The controlled comparison is the only study that isolates the update rule and its stored history under validation-only model selection, and it therefore carries the central claim. The screening studies on digits, the six-dataset evaluation, and the character-level text study map the accuracy and memory trade-off against a wider range of competitors and tasks, including Adam with continuous parameters, at the price of weaker controls. The instrumented replays measure how each method moves the phase memory, which turns open questions about mechanisms into measurements. We keep these levels separate in every table and figure.

\begin{table}[t]
\centering\small
\caption{\textbf{Evidence map.} The five lines of study, the question each answers, and the controls under which it was run. ``Selection'' states which data chose the hyperparameters.}
\label{tab:evidence}
\begin{tabularx}{\textwidth}{p{2.6cm}p{2.7cm}Xp{2.4cm}}
\toprule
Study & Data and scale & Question & Selection; seeds\\
\midrule
Controlled comparison (\cref{sec:setup-controlled}) & digits; $D=64$, $k=6$ & Does removing all optimizer history cost accuracy when parameters, head, and initialization are identical? & separate validation set; 5\\
Screening studies (\cref{sec:setup-screening}) & digits; $D=96$, 20 epochs & Where does \PHDC{} sit among float32, 8-bit, 6-bit, and hybrid training systems? Do generated item memories and fixed-point gradients work? & tuning seed on evaluation split; 5\\
Six-dataset evaluation (\cref{sec:setup-exploratory}) & six public datasets; $D\leq384$ & How large are the accuracy cost and the state saving across tasks, and how does the phase width matter? & tuning seed on evaluation split; 3\\
Character-level text (\cref{sec:setup-text}) & five text corpora; $D=128$, 64-symbol context & Does the trade-off carry over to next-symbol prediction with vocabularies of up to 256 symbols? & tuning seed on test windows; 3\\
Instrumented replays (\cref{sec:setup-diagnostics}) & digits; controlled runs & How often and how far does each method move each coordinate, and why? & replays of the final runs; 5\\
\bottomrule
\end{tabularx}
\end{table}

\subsection{Controlled Equal-Precision Comparison}
\label{sec:setup-controlled}
The controlled experiment uses the 1{,}797-example $8\times8$ handwritten digit dataset distributed with scikit-learn~\citep{pedregosa2011}. Pixel values are integers in $\{0,\ldots,16\}$, so $F=64$ and $B=17$. A stratified split assigns 1{,}077 examples to training, 360 to validation, and 360 to testing. Every method uses $D=64$, $k=6$, $C=10$, $T=0.3$, minibatches of 128 examples, and 20 epochs, which gives 180 updates per run. There is no augmentation, learning-rate schedule, or early stopping, and the validation set is never merged into training.

The competitor is Adam constrained to the same phase grid, which we call \emph{projected Adam}. At step $t$, with minibatch gradient $g_t$, it updates the first moment $m_t$ and second moment $v_t$ of every coordinate and rounds the candidate step onto the grid:
\begin{equation}
\begin{gathered}
 m_t=\beta_1 m_{t-1}+(1-\beta_1)g_t,\qquad
 v_t=\beta_2 v_{t-1}+(1-\beta_2)g_t^{2},\\
 r_t=\frac{m_t/(1-\beta_1^{\,t})}{\sqrt{v_t/(1-\beta_2^{\,t})}+\eps_{\mathrm{A}}},\qquad
 q_t=\rnd\bigl(q_{t-1}-\eta\,r_t\bigr)\bmod K,
\end{gathered}
 \label{eq:projadam}
\end{equation}
with $(\beta_1,\beta_2)=(0.9,0.999)$, $\eps_{\mathrm{A}}=10^{-8}$, and learning rate $\eta$ (separately $\eta_{\mathrm{enc}}$ and $\eta_{\mathrm{cls}}$ for encoder and class tables). Here $r_t$ is the bias-corrected moment ratio and $\rnd$ rounds to the nearest integer. Projected Adam keeps no continuous residual of the parameters, so its parameters are exactly as constrained as those of \PHDC{}. Its moment arrays are stored in float32 or simulated with 8-bit or 6-bit per-tensor affine quantization (\cref{app:quantizer}). The three Adam arms thus differ only in moment precision, and all four methods share the stored parameter alphabet, the class head, the initial integer phase tables of each seed, and the minibatch order.

Each method receives a grid of nine configurations, selected by final-epoch validation accuracy on seed 0. For \PHDC{}, $\tau_{\mathrm{enc}}$ and $\tau_{\mathrm{cls}}$ each range over $\{0,3\times10^{-5},3\times10^{-4}\}$. For Adam, $\eta_{\mathrm{enc}}$ and $\eta_{\mathrm{cls}}$ each range over $\{0.3,1,3\}$. Ties select the first configuration in declared order. All selection ends before any test evaluation, after which the selected configuration runs on seeds 1 to 5. All three Adam arms selected $\eta_{\mathrm{enc}}=\eta_{\mathrm{cls}}=3$, the upper edge of the grid; \cref{sec:discussion-gating} explains why projection onto a grid favors large learning rates. \PHDC{} selected $\tau_{\mathrm{enc}}=3\times10^{-5}$ and $\tau_{\mathrm{cls}}=3\times10^{-4}$. A float64 central-difference check confirms the analytic gradient on all 45 coordinates of a small instance, with maximum absolute discrepancy $4.73\times10^{-11}$.

\subsection{Screening Studies on Digits}
\label{sec:setup-screening}
Three screening studies, also run on the scikit-learn digits, preceded the controlled comparison. They use $D=96$ and 20 epochs. Each method searched a grid of eight configurations on seed 0, and the selected configuration was then evaluated on seeds 1 to 5. The screening records do not document a validation split separate from the evaluation split, so we treat their selection like that of the six-dataset evaluation, as exploratory. Their executable code is not part of the released package; their accuracies come from the archived reports and summary tables, and we recompute every state figure with the accounting of \cref{eq:memoryarchive} so that all ratios in the paper are comparable (\cref{app:screening-accounting}).

\paragraph{Training systems on the accuracy and memory frontier.} The first study compares eight training systems. \emph{Adam-fp32} trains continuous float32 encoder phases and free real and imaginary class components with Adam, the standard way to train such a model. \emph{Affine Adam-8} and \emph{Affine Adam-6} store all parameters and both moments with per-tensor affine quantization at 8 and 6 bits (\cref{app:quantizer}). A \emph{hybrid} system trains a $k$-bit integer encoder with the update of \cref{eq:update} but keeps a free float32 class head trained by Adam, for $k\in\{4,6\}$. Finally, \PHDC{} is run at $k\in\{4,6,8\}$. At $k=8$, a first threshold grid inherited from $k=6$ was replaced by a grid extended down to $\tau=10^{-5}$ after the gradient-scale argument of \cref{eq:precision} was recognized; we report both outcomes.

\paragraph{Generated item memories.} The second study tests whether the item memory can be generated rather than stored. In the manner of fractional binding~\citep{komer2019}, each attribute receives one learned frequency vector, and the phase of symbol $s$ at coordinate $d$ is $s$ times that frequency modulo $K$. Two variants are compared with a dense table at $D=128$: a frequency over the flattened pixel index, and separate row and column frequencies bound together.

\paragraph{Fixed-point gradient arithmetic.} The third study asks whether the gradient itself can be computed in low precision, as an integer device would require. A simulation rounds two groups of intermediate quantities to a fixed number of bits: the cosine and sine lookup table that decodes phases (\emph{table bits}), and the logits, softmax, and probability computations of the head (\emph{softmax bits}). It measures, first, the share of gradient signs that agree with the exact float gradient at one frozen parameter state and, second, the test accuracy after training end to end with the rounded pipeline, with $k=6$ and table bits fixed at 6. Rounding is simulated by quantizing and dequantizing in floating point, which reproduces rounding error but not integer overflow or saturation.

\subsection{Six-Dataset Evaluation}
\label{sec:setup-exploratory}
The six-dataset evaluation uses Letter Recognition~\citep{slate1991}, ISOLET~\citep{cole1991}, human activity recognition from smartphones (HAR)~\citep{reyes2013}, MNIST~\citep{lecun1998}, Fashion-MNIST~\citep{xiao2017}, and CIFAR-10~\citep{krizhevsky2009}. \Cref{tab:datasets} lists the settings. Letter and ISOLET use stratified 70/30 example-level splits with random seed 0; the ISOLET split therefore differs from its standard speaker-disjoint split. HAR and the image datasets use their supplied partitions. Tabular features are discretized with training-set minima and maxima, and image intensities with fixed bins (\cref{app:protocol}).

\begin{table}[t]
\centering\small
\caption{Settings of the six-dataset evaluation. $F$: positions; $B$: value bins; $C$: classes; $D$: phase dimension. The evaluation examples also served for hyperparameter selection.}
\label{tab:datasets}
\begin{tabular}{lrrrrrrrr}
\toprule
Dataset & Train & Eval. & $F$ & $B$ & $C$ & $D$ & Epochs & Batch\\
\midrule
Letter & 14{,}000 & 6{,}000 & 16 & 16 & 26 & 128 & 25 & 256\\
ISOLET & 5{,}457 & 2{,}340 & 617 & 32 & 26 & 256 & 25 & 256\\
HAR & 7{,}352 & 2{,}947 & 561 & 32 & 6 & 256 & 25 & 256\\
MNIST & 60{,}000 & 10{,}000 & 784 & 16 & 10 & 384 & 10 & 128\\
Fashion-MNIST & 60{,}000 & 10{,}000 & 784 & 16 & 10 & 384 & 10 & 128\\
CIFAR-10 & 50{,}000 & 10{,}000 & 1{,}024 & 64 & 10 & 384 & 10 & 64\\
\bottomrule
\end{tabular}
\end{table}

Four training systems are compared, with the same names as in the screening studies. \emph{Phase-HDC-6} is \PHDC{} with $k=6$. \emph{Adam-fp32} learns continuous float32 encoder phases and free complex class components, differentiating the continuous forward map directly. \emph{Affine Adam-8} and \emph{Affine Adam-6} store all parameters and both moments with per-tensor affine quantization, using exponential range tracking with rate $\rho=0.01$, and dequantize before each floating-point computation. Their phase parameters need not lie on the equally spaced grid of \PHDC{}, and they do not use block-wise quantization~\citep{dettmers2022}. A follow-up sweep runs \PHDC{} at $k\in\{2,4,8\}$ with the same threshold grid as $k=6$.

Each Adam arm searched eight learning-rate pairs and \PHDC{} nine threshold pairs on seed 0, and the selected setting then ran on three further seeds. The main comparison of Adam-fp32, Affine Adam-8, and Phase-HDC-6 completed in a single 7.9-hour session on one T4-class GPU; Affine Adam-6 and the width sweep ran in two later sessions under the same protocol. Selection used the same evaluation examples that are later reported. A change of seed does not make those examples held out, and selection bias of this kind can be as large as the differences between algorithms~\citep{cawley2010}. The class heads also differ between \PHDC{} and the Adam arms. We therefore treat this evaluation as a comparison of complete training systems that indicates trends, not as an isolated test of moment removal.

\subsection{Character-Level Text Prediction}
\label{sec:setup-text}
The text study asks whether the trade-off holds on a task family that differs from image and tabular classification in two ways: the inputs are sequences of discrete symbols, and the number of classes equals the size of the vocabulary. The task is next-symbol prediction. Each input is a window of the 64 preceding symbols, so $F=64$ positions, each holding a symbol from a vocabulary of size $B$; the label is the following symbol, so $C=B$. The phase memory therefore learns one vector per context position, one per symbol as an input value, and one per symbol as a class. Because bundling sums the bound phasors of all positions, the model sees which symbol occurs at which offset but has no recurrence.

Five corpora are used, each truncated to its first $10^6$ characters, or bytes for enwik8: Tiny Shakespeare~\citep{karpathy2015}, TinyStories~\citep{eldan2023}, the language-modeling version of the Penn Treebank (PTB)~\citep{marcus1993}, and text8 and enwik8 from the Large Text Compression Benchmark~\citep{mahoney2011}. Tiny Shakespeare, TinyStories, and PTB are restricted to printable ASCII characters plus newline and tab, with any other character replaced by a space, which gives vocabularies of 65, 70, and 49 symbols. text8 contains only lowercase letters and spaces (27 symbols). enwik8 is raw Wikipedia markup modeled at the byte level, so its vocabulary has all 256 byte values, and many of them, such as the bytes that encode non-ASCII characters, are rare. The last $10\%$ of each corpus is held out as a contiguous test segment, so that no test window overlaps the training text. From each part, 150{,}000 training windows and 15{,}000 test windows are sampled.

All systems use $D=128$, $k=6$, $T=0.3$, four epochs, and minibatches of 256 windows. The three training systems are Adam-fp32, Affine Adam-8, and Phase-HDC-6, defined as in \cref{sec:setup-exploratory}. Each searched a grid of four configurations on seed 0, evaluated on the test windows, and the selected configuration then ran on three further seeds. Selection therefore reuses the reported test windows, as in the six-dataset evaluation. The complete study ran in about 21 minutes on one T4-class GPU. A sixth corpus, WikiText-2, was dropped because its loader failed on the run's library version; no result for it exists.

\subsection{Update-Dynamics Instrumentation}
\label{sec:setup-diagnostics}
To measure mechanisms rather than infer them, we replay the 20 final controlled runs (four methods, five seeds) with instrumentation. The replay uses the released split, selected configurations, and seeds, touches only training data, and reproduces every released final checkpoint bit for bit. For each step and table it records the share of coordinates whose index changes (the \emph{write fraction}), the distribution of circular displacements in notches, and the training loss after each epoch. For the quantized-moment arms it also records the share of second-moment entries stored at the lowest quantization code and the share of first-moment entries whose sign changes under quantization. A second replay of the 6-bit arm resolves these quantities by row of the value table, that is, by pixel value. A third check reruns projected Adam at the smallest learning rates of the controlled grid on the tuning seed and counts how many stored indices ever leave their initial values.

\subsection{Statistical Reporting}
\label{sec:setup-stats}
All reported numbers are computed from individual seed records where these exist, and from the archived summary tables for the screening studies. We report means and sample standard deviations (divisor $n-1$) over seeds; on a fixed data split these describe variation over initialization and minibatch order, not generalization uncertainty. For the controlled comparison we add a paired Student-$t$ interval over the five seeds. For the six-dataset evaluation we add a two-sided Wilcoxon signed-rank test over datasets; with six datasets its smallest attainable $p$-value is $1/32\approx0.031$, so it can establish at most that a direction of difference never reverses. The same test over all eleven image, tabular, and text datasets has a smallest attainable $p$-value of $1/1024$. State is reported in KiB ($1{,}024$ bytes) with the accounting of \cref{sec:accounting}. Averages over datasets weight each dataset equally, and an average over a subset of datasets is always labeled as such.

\section{Results}
\label{sec:results}
The results are ordered from the most to the least controlled comparison. \Cref{sec:results-controlled} establishes what removing all optimizer history costs when everything else is held fixed. \Cref{sec:results-screening,sec:results-fp32,sec:results-affine,sec:results-text} place \PHDC{} among realistic training systems on digits, on six image and tabular datasets, and on five text corpora. \Cref{sec:results-width,sec:results-generated,sec:results-fixedpoint} examine three design choices: the phase width, the stored item memory, and the precision of gradient arithmetic.

\subsection{Parity-Level Accuracy at One Third of the Persistent State}
\label{sec:results-controlled}
\Cref{tab:controlled} and \cref{fig:controlled} report the controlled comparison. \PHDC{} reaches $92.44\pm0.93\%$ test accuracy and projected Adam with 6-bit moments reaches $91.22\pm1.19\%$. Both store the same 5{,}824 six-bit phase indices; projected Adam adds two arrays of equal size, so the persistent state is 4.27~KiB against 12.80~KiB. The factor of three is an exact consequence of \cref{eq:memorycontrolled}.

\begin{table}[t]
\centering\small
\caption{\textbf{Controlled digit comparison.} All methods share the 6-bit phase grid, class head, initial tables, and minibatch order; only the update and its stored moments differ. Accuracy is mean $\pm$ sample SD over five seeds after validation-only selection. State counts packed parameter and moment arrays. Ratio is state relative to \PHDC{}. Bold marks the best value in each column.}
\label{tab:controlled}
\begin{tabular}{lrrr}
\toprule
Method & Test accuracy (\%) & State (KiB) & Ratio\\
\midrule
\input{tables/controlled_digits}
\end{tabular}
\end{table}

The paired mean difference over the 6-bit-moment arm is $1.22$ percentage points, with a 95\% interval of $[-0.44,2.89]$. The interval contains zero, and five seeds on one split cannot establish either superiority or equivalence. What the data do establish is that removing both moment arrays does not reduce the mean accuracy of this classifier at this budget. The 8-bit and float32 moment arms reach $91.44\%$ and $90.78\%$. Their order is not monotone in precision, which already suggests that moment precision is not what limits projected Adam here; \cref{sec:discussion-gating} identifies what does.

The validation grid shows a second difference. Across its nine configurations, \PHDC{} reached between $85.3\%$ and $94.7\%$ validation accuracy. Projected Adam ranged from $9.7\%$ to $92.2\%$ with float32 moments, from $48.3\%$ to $94.7\%$ with 6-bit moments, and from $43.6\%$ to $93.6\%$ with 8-bit moments. The lowest values belong to the smallest learning rates, and \cref{sec:discussion-gating} shows that at these rates projection prevents the parameters from moving at all.

\begin{figure}[t]
\centering
\includegraphics[width=\textwidth]{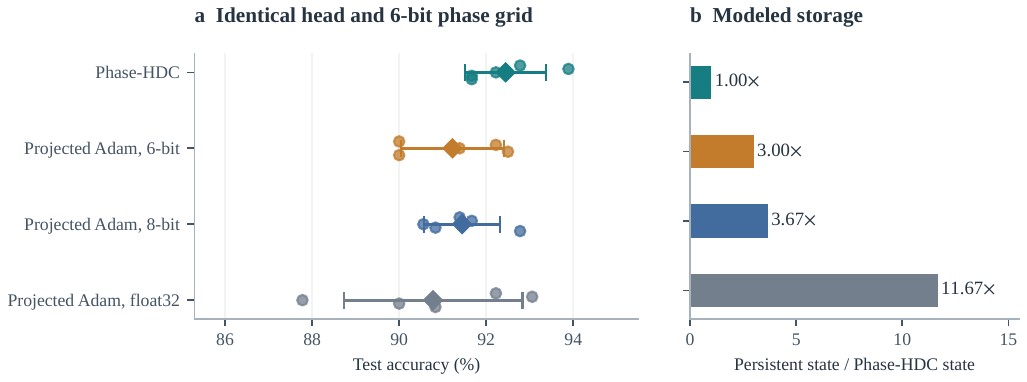}
\caption{\textbf{Controlled comparison on a common 6-bit phase grid.} (a)~Test accuracy of the five final seeds per method; diamonds and bars give mean $\pm$ sample SD. (b)~Persistent parameter and moment state relative to \PHDC{}. All methods share parameters, head, and initialization, and were tuned on a separate validation set. Ratios are logical storage, not measured speed or memory.}
\label{fig:controlled}
\end{figure}

\subsection{Accuracy and Memory Frontier on Digits}
\label{sec:results-screening}
The screening study places \PHDC{} among eight training systems (\cref{tab:screening} and \cref{fig:screening}a). Adam-fp32 is the most accurate at $96.48\pm0.56\%$, and it also keeps the most state, 113.6~KiB. \PHDC{} at $k=6$ reaches $93.70\pm0.94\%$ with 6.4~KiB, which is $2.78$ points lower at $17.8\times$ less state. At $k=4$, \PHDC{} keeps only 4.3~KiB, $26.6\times$ less than Adam-fp32, but its accuracy falls to $84.81\pm2.72\%$. At $k=8$, it reaches $93.81\pm1.07\%$ with 8.5~KiB, a $13.3\times$ saving.

\begin{table}[t]
\centering\small
\caption{\textbf{Screening frontier on digits} ($D=96$, five seeds). Gap is the accuracy difference to Adam-fp32 in percentage points; ratio is Adam-fp32 state divided by each system's state. Hybrid systems train an integer encoder with \cref{eq:update} and a float32 class head with Adam. State follows \cref{eq:memoryarchive}. Bold marks the best accuracy and the smallest state.}
\label{tab:screening}
\begin{tabular}{lrrrr}
\toprule
System & Accuracy (\%) & Gap (pp) & State (KiB) & Ratio\\
\midrule
\input{tables/screening_frontier}
\end{tabular}
\end{table}

Against quantized Adam the picture changes with precision. At equal 6-bit precision, \PHDC{} is $0.55$ points more accurate than Affine Adam-6 ($93.70\%$ against $93.15\%$), keeps $3.3\times$ less state, and varies three times less across seeds (SD $0.94$ against $2.83$). Against Affine Adam-8, it is $0.71$ points less accurate and keeps $4.4\times$ less state; at $k=8$ the deficit is $0.60$ points at $3.3\times$ less state. The hybrid systems sit between these points: an integer encoder with a float32 class head reaches $94.33\%$ at $k=6$ and $94.89\%$ at $k=4$, but keeps more than 26~KiB because its class head carries float32 parameters and moments. \Cref{sec:discussion-gap} uses these intermediate systems to locate where the accuracy gap to Adam-fp32 arises.

The $k=8$ result depends on the threshold grid. With the grid inherited from $k=6$, \PHDC{} reached only $86.22\%$ at $k=8$. An extended grid reaching down to $\tau=10^{-5}$ raised it to $93.81\%$. This is the effect predicted by \cref{eq:precision}: two more bits shrink the index-space gradient by a factor of four, so a threshold tuned for $k=6$ is four times too strict at $k=8$.

\begin{figure}[t]
\centering
\includegraphics[width=\textwidth]{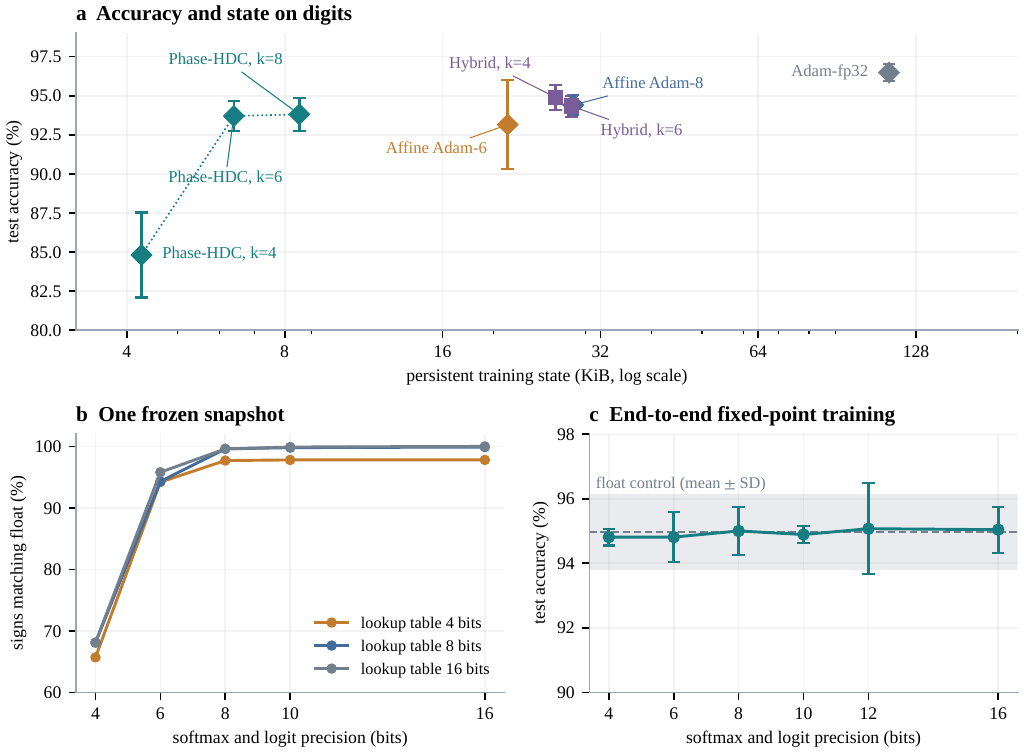}
\caption{\textbf{Screening studies on digits.} (a)~Accuracy against persistent training state for eight training systems (mean $\pm$ SD over five seeds; logarithmic horizontal axis). The dotted line connects \PHDC{} at $k=4$, 6, and 8. (b)~Share of gradient signs that agree with the exact float gradient at one frozen parameter state, as the softmax and lookup-table precisions vary. (c)~Test accuracy after end-to-end training with the rounded pipeline (table bits fixed at 6); the shaded band is the float control mean $\pm$ SD under the same threshold grid.}
\label{fig:screening}
\end{figure}

\subsection{Accuracy Cost Relative to Float32 and 8-bit Adam}
\label{sec:results-fp32}
The six-dataset evaluation repeats this comparison on harder tasks (\cref{tab:gpuaccuracy,tab:gpustate}). Adam-fp32 is more accurate than \PHDC{} on every dataset, by $7.00$ points on average. The gap ranges from $2.61$ points on Fashion-MNIST to $15.64$ points on ISOLET, and its direction never reverses (Wilcoxon $p=0.031$, the smallest value attainable with six datasets). In exchange, \PHDC{} keeps $16.2$ to $23.2\times$ less persistent state. The largest ratio occurs on Letter, whose class table is large relative to its encoder, so that the doubled class representation of Adam weighs more (\cref{eq:memoryarchive}). These are material costs, and the ISOLET gap in particular rules out a general claim that optimizer history can be removed at negligible accuracy cost.

\begin{table}[t]
\centering\small
\caption{\textbf{Accuracy (\%) on six datasets.} Mean $\pm$ sample SD over three seeds. Bold marks the best system per dataset; underline marks the better of the two 6-bit systems when it is not the overall best. Evaluation data were reused for tuning, and the Adam arms have free complex class vectors, so entries compare complete systems.}
\label{tab:gpuaccuracy}
\begin{tabular}{lrrrr}
\toprule
Dataset & Adam-fp32 & Affine Adam-8 & Affine Adam-6 & Phase-HDC-6\\
\midrule
\input{tables/gpu_accuracy}
\end{tabular}
\end{table}

\begin{table}[t]
\centering\small
\caption{\textbf{Persistent state on six datasets (KiB)} and its ratio to Phase-HDC-6 for each Adam system. Ideal packed parameter and moment arrays, not measured allocations. Bold marks the smallest state per dataset.}
\label{tab:gpustate}
\begin{tabular}{lrrrrrrr}
\toprule
 & \multicolumn{4}{c}{State (KiB)} & \multicolumn{3}{c}{Ratio to Phase-HDC-6}\\
\cmidrule(lr){2-5}\cmidrule(lr){6-8}
Dataset & Adam-fp32 & Adam-8 & Adam-6 & Phase-HDC-6 & fp32 & 8-bit & 6-bit\\
\midrule
\input{tables/gpu_state}
\end{tabular}
\end{table}

Against Affine Adam-8, the mean deficit of \PHDC{} narrows to $2.89$ points while its state is $4.0$ to $5.8\times$ smaller. \PHDC{} is ahead on MNIST by $1.50$ points and on CIFAR-10 by $9.54$ points, and behind on the other four datasets; the direction is not consistent (Wilcoxon $p=0.44$). CIFAR-10 shows the largest favorable difference, $32.54\%$ against $23.00\%$, yet Adam-fp32 reaches $37.04\%$ on the same task. \Cref{fig:gpu} places every system in the accuracy and state plane so that a storage ratio is always read together with the accuracy it costs.

\begin{figure}[t]
\centering
\includegraphics[width=\textwidth]{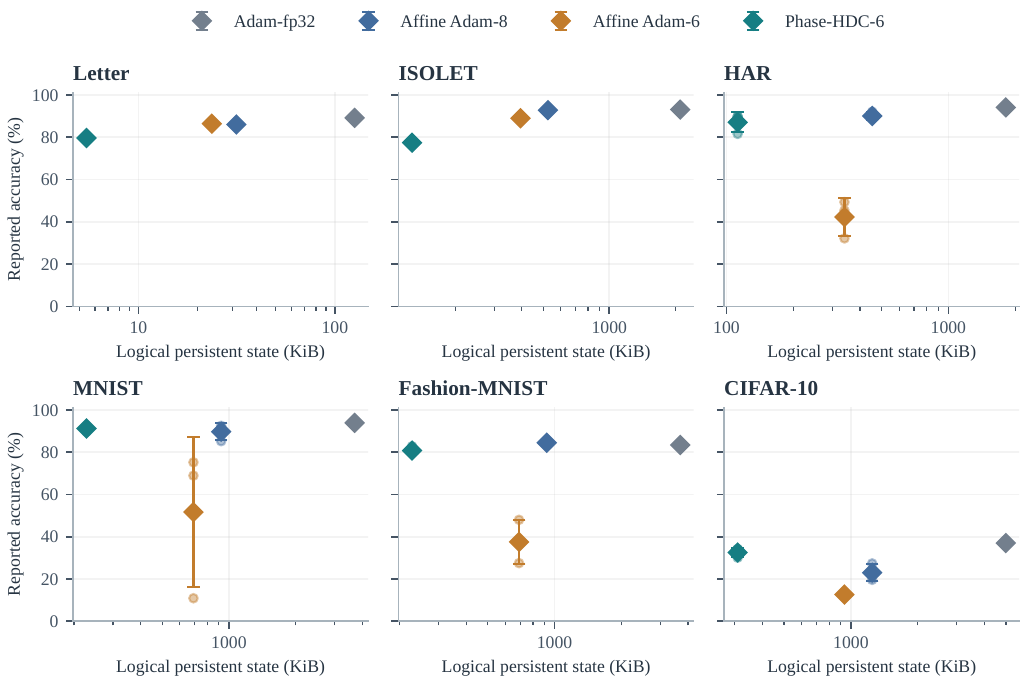}
\caption{\textbf{Accuracy against persistent state on six datasets.} Diamonds and bars show mean $\pm$ sample SD; faint points are individual seeds. The horizontal axis is logical persistent state on a logarithmic scale. Hyperparameter selection reused evaluation data, and class heads differ between \PHDC{} and the Adam arms. Adam-fp32 is the most accurate system on five datasets and Affine Adam-8 on Fashion-MNIST; Affine Adam-6 shows large, dataset-dependent losses.}
\label{fig:gpu}
\end{figure}

\subsection{Collapse of Per-Tensor 6-bit Adam}
\label{sec:results-affine}
At equal 6-bit precision, the comparison on six datasets is far more lopsided than on digits. \PHDC{} has higher mean accuracy than Affine Adam-6 on HAR, MNIST, Fashion-MNIST, and CIFAR-10, by $44.80$, $39.52$, $43.24$, and $19.87$ points, an average of $36.86$ points on these four datasets. On Letter and ISOLET, \PHDC{} trails by $6.77$ and $11.55$ points. Averaged over all six datasets the difference is $21.52$ points in favor of \PHDC{} (Wilcoxon $p=0.16$, because the direction reverses on two datasets). Throughout, \PHDC{} keeps $3.0$ to $4.3\times$ less state.

The losses of Affine Adam-6 are not graceful. On MNIST its three seeds reach $75.24\%$, $68.94\%$, and $10.90\%$, a sample SD of $35.47$ points against $1.18$ for \PHDC{}. Variability is not negligible for \PHDC{} either: its SD on HAR is $4.74$ points. The tuning records show that the collapse is not caused by one unlucky configuration. Across its eight configurations on the tuning seed, Affine Adam-6 never exceeded $72.0\%$ on HAR, $72.9\%$ on MNIST, $60.7\%$ on Fashion-MNIST, or $13.9\%$ on CIFAR-10, whereas the best \PHDC{} configurations reached $90.0\%$, $91.7\%$, $83.8\%$, and $31.7\%$ (\cref{tab:sensitivity}). The instrumented replays in \cref{sec:discussion-quantization} identify a mechanism that explains why this failure appears on some datasets and not on others.

\subsection{Character-Level Text Prediction}
\label{sec:results-text}
The text study reproduces the trade-off on a different task family (\cref{tab:text,tab:textstate} and \cref{fig:text}). All three systems are far above the uniform-guessing level of one over the vocabulary size on every corpus. That level is a lenient reference, because the most frequent symbol, typically the space, alone accounts for a sizable share of English text, so the informative comparisons are those between training systems.

Adam-fp32 is the most accurate system on all five corpora. \PHDC{} trails it by $1.65$ points on text8 to $4.97$ points on enwik8, $3.28$ points on average, a smaller gap than on the image and tabular datasets, while keeping $19.7$ to $23.1\times$ less state. Against Affine Adam-8, \PHDC{} is more accurate on four of five corpora, by $8.9$ to $10.5$ points on Tiny Shakespeare, TinyStories, and PTB and by $19.5$ points on enwik8, and it trails by $0.60$ points on text8. It keeps $4.9$ to $5.8\times$ less state, and the ratio grows with the vocabulary because the free class head of Adam stores two real coordinates and two moments per class coordinate.

\begin{table}[t]
\centering\small
\caption{\textbf{Next-symbol accuracy (\%) on five text corpora.} Mean $\pm$ sample SD over three seeds. $B$ is the vocabulary size and $100/B$ the uniform-guessing level. Bold marks the best system per corpus. Selection reused the test windows.}
\label{tab:text}
\begin{tabular}{lrrrrr}
\toprule
Corpus & $B$ & $100/B$ & Adam-fp32 & Affine Adam-8 & Phase-HDC-6\\
\midrule
\input{tables/text_accuracy}
\end{tabular}
\end{table}

\begin{table}[t]
\centering\small
\caption{\textbf{Persistent state on five text corpora (KiB)} and its ratio to Phase-HDC-6. Bold marks the smallest state per corpus.}
\label{tab:textstate}
\begin{tabular}{lrrrrr}
\toprule
 & \multicolumn{3}{c}{State (KiB)} & \multicolumn{2}{c}{Ratio to Phase-HDC-6}\\
\cmidrule(lr){2-4}\cmidrule(lr){5-6}
Corpus & Adam-fp32 & Adam-8 & Phase-HDC-6 & fp32 & 8-bit\\
\midrule
\input{tables/text_state}
\end{tabular}
\end{table}

enwik8 stands out. With 256 byte values, it has the largest vocabulary and therefore the largest class table of any task in this paper. Affine Adam-8 collapses there: its three seeds reach $1.44\%$, $0.01\%$, and $0.90\%$, barely above the $0.39\%$ guessing level, and even its best tuning configuration reached only $12.6\%$ on the tuning seed. \PHDC{} reaches $20.26\pm0.44\%$ and Adam-fp32 $25.23\pm0.34\%$. Unlike the image collapses of \cref{sec:results-affine}, which occurred at 6 bits, this one occurs with 8-bit moments. text8 is the opposite case: with only 27 symbols, all of them common, Affine Adam-8 stays within $1.05$ points of Adam-fp32. \Cref{sec:discussion-quantization} relates both cases to the frequency mechanism measured on digits.

The six image and tabular datasets and the five corpora together give eleven datasets. Adam-fp32 is more accurate than \PHDC{} on all eleven (Wilcoxon $p=0.001$), by $5.31$ points on average. \PHDC{} is more accurate than Affine Adam-8 on six of the eleven, by $2.85$ points on average, but the direction is not consistent (Wilcoxon $p=0.41$).

\begin{figure}[t]
\centering
\includegraphics[width=\textwidth]{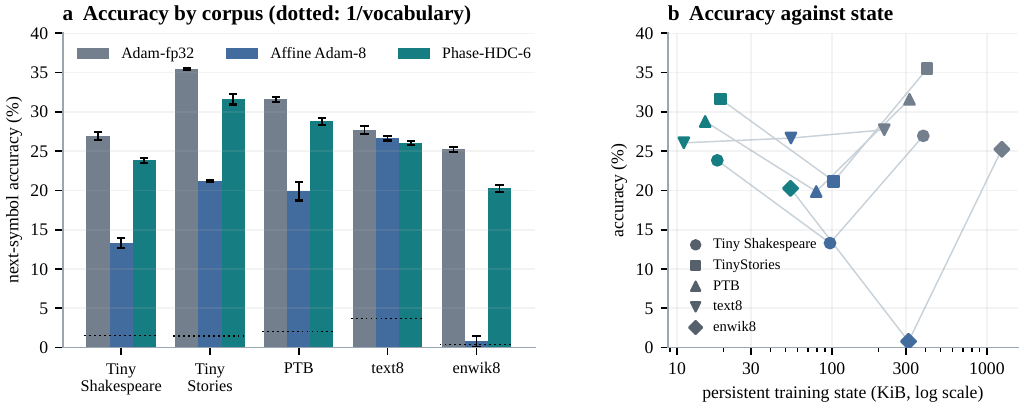}
\caption{\textbf{Character-level text prediction on five corpora.} (a)~Next-symbol accuracy (mean $\pm$ SD over three seeds); dotted lines mark one over the vocabulary size. (b)~Accuracy against persistent training state; lines connect the three systems on the same corpus, colors follow (a), and markers identify the corpus. Affine Adam-8 collapses on byte-level enwik8.}
\label{fig:text}
\end{figure}

\subsection{Phase Width on Six Datasets}
\label{sec:results-width}
\Cref{fig:width} varies the phase width $k\in\{2,4,6,8\}$ for \PHDC{} on the six datasets. At $k=2$, accuracy stays near the class-frequency level everywhere. At $k=4$, Letter reaches $69.63\%$, while MNIST and Fashion-MNIST reach only $16.06\%$ and $17.90\%$ with SDs of $11.65$ and $9.61$ points. At $k=6$, all six tasks improve substantially. At $k=8$, accuracy improves slightly on the three image datasets and falls on the three tabular datasets, most sharply on ISOLET, from $77.49\%$ to $57.72\%$.

These results do not show that four phase levels are too few to represent a useful classifier. By \cref{eq:precision}, the sweep changes the representation, the angular step ($90^\circ$, $22.5^\circ$, $5.625^\circ$, and $1.40625^\circ$), and the angular gate together, because the same threshold grid is reused at every width. At $k=8$, the selected encoder threshold lies at the smallest grid value on five of six datasets, the same signature that the extended grid resolved on digits (\cref{sec:results-screening}). \Cref{sec:discussion-width} analyzes which effects account for the failures at two, four, and eight bits.

\begin{figure}[t]
\centering
\includegraphics[width=\textwidth]{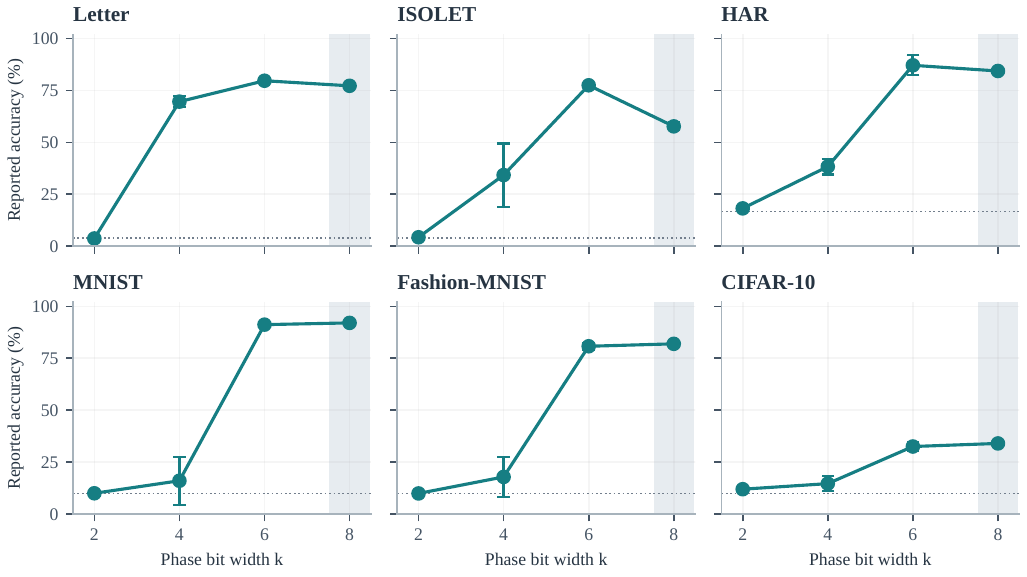}
\caption{\textbf{Phase-width sweep of \PHDC{} on six datasets.} Mean accuracy and sample SD over three seeds. Dotted lines mark $1/C$. The shaded column marks $k=8$, where the selected encoder threshold lies at the grid boundary on five of six datasets. Step size and effective gate change with $k$ (\cref{eq:precision}), so the sweep is not an isolated capacity experiment. Exact values are in \cref{tab:width}.}
\label{fig:width}
\end{figure}

\subsection{Generated Item Memories}
\label{sec:results-generated}
A generated item memory would store one frequency vector per attribute instead of one vector per symbol, which would shrink the item memory by orders of magnitude. \Cref{tab:generated} shows that this saving destroys accuracy on digits. The flattened-index variant reaches $38\%$ to $55\%$ across bit widths, and the row-and-column variant reaches $72\%$ at $k=8$ even with twice the dimension. The dense table reaches $94\%$ to $96\%$ at every width, 25 to 45 points higher. The reason is structural. A single frequency places the codes of all symbols of an attribute on one helix in phase space, so that neighboring symbols receive similar codes. That property makes fractional binding useful for continuous space~\citep{komer2019}, but a classifier needs the opposite: each pixel value and position must be free to receive whatever code best separates the classes. All other results in this paper therefore use dense tables.

\begin{table}[t]
\centering\small
\caption{\textbf{Generated against dense item memories on digits} (test accuracy, \%, $D=128$). A generated memory derives every symbol's phase from one learned frequency per attribute. $^{\dagger}$At $D=256$. Bold marks the best memory per width.}
\label{tab:generated}
\begin{tabular}{lrrrr}
\toprule
Item memory & $k=2$ & $k=4$ & $k=6$ & $k=8$\\
\midrule
\input{tables/screening_generated}
\end{tabular}
\end{table}

\subsection{Fixed-Point Gradient Arithmetic}
\label{sec:results-fixedpoint}
An integer device must compute the gradient itself in low precision, not only store the parameters that way. At one frozen parameter state, low precision visibly corrupts the gradient (\cref{fig:screening}b and \cref{tab:fpsnapshot}). With 4-bit softmax arithmetic, only $65.7\%$ to $68.1\%$ of gradient signs agree with the exact float gradient, whatever the precision of the lookup table. With 8-bit softmax arithmetic the agreement rises to $97.7\%$ to $99.6\%$. The precision of the head, not of the trigonometric table, sets the agreement.

\begin{table}[t]
\centering\small
\caption{\textbf{Gradient sign agreement (\%) with the exact float gradient} at one frozen parameter state, by lookup-table precision (rows) and softmax precision (columns).}
\label{tab:fpsnapshot}
\begin{tabular}{lrrrrr}
\toprule
Table bits & 4 & 6 & 8 & 10 & 16\\
\midrule
\input{tables/screening_fp_snapshot}
\end{tabular}
\end{table}

End-to-end training tells a different story (\cref{fig:screening}c and \cref{tab:fptraining}). With table bits fixed at 6, every softmax precision from 16 down to 4 bits trains to between $94.81\%$ and $95.07\%$, inside one standard deviation of the float control at $94.96\pm1.17\%$. A configuration with a 4-bit table and 8-bit softmax reached $95.89\pm0.64\%$. Training is thus far more tolerant of rounded gradients than a single-state agreement measurement suggests. The float control here is $1.26$ points higher than the $93.70\%$ of \cref{tab:screening} because this study used a finer threshold grid, which again shows how much \PHDC{} depends on its threshold.

\begin{table}[!htbp]
\centering\small
\caption{\textbf{Test accuracy (\%) after end-to-end training with rounded gradient arithmetic} (table bits 6, five seeds). The control computes in float with the same threshold grid.}
\label{tab:fptraining}
\begin{tabular}{lr}
\toprule
Softmax bits & Accuracy (\%)\\
\midrule
\input{tables/screening_fp_training}
\end{tabular}
\end{table}

%% file: tables/controlled_digits.tex
Phase-HDC & $\boldsymbol{92.44 \pm 0.93}$ & \textbf{4.27} & $\mathbf{1.00}\boldsymbol{\times}$ \\
Projected Adam, 6-bit moments & $91.22 \pm 1.19$ & 12.80 & $3.00\times$ \\
Projected Adam, 8-bit moments & $91.44 \pm 0.87$ & 15.64 & $3.67\times$ \\
Projected Adam, float32 moments & $90.78 \pm 2.05$ & 49.77 & $11.67\times$ \\
\bottomrule

%% file: tables/screening_frontier.tex
Adam-fp32 & $\boldsymbol{96.48 \pm 0.56}$ & --- & 113.62 & 1.0$\times$ \\
Affine Adam-8 & $94.41 \pm 0.65$ & $-2.07$ & 28.41 & 4.0$\times$ \\
Affine Adam-6 & $93.15 \pm 2.83$ & $-3.33$ & 21.30 & 5.3$\times$ \\
\midrule
Hybrid, $k=4$ & $94.89 \pm 0.80$ & $-1.59$ & 26.30 & 4.3$\times$ \\
Hybrid, $k=6$ & $94.33 \pm 0.67$ & $-2.15$ & 28.20 & 4.0$\times$ \\
\midrule
Phase-HDC, $k=4$ & $84.81 \pm 2.72$ & $-11.67$ & \textbf{4.27} & \textbf{26.6}$\boldsymbol{\times}$ \\
Phase-HDC, $k=6$ & $93.70 \pm 0.94$ & $-2.78$ & 6.40 & 17.8$\times$ \\
Phase-HDC, $k=8$ & $93.81 \pm 1.07$ & $-2.67$ & 8.53 & 13.3$\times$ \\
\bottomrule

%% file: tables/gpu_accuracy.tex
Letter & $\boldsymbol{89.19 \pm 0.18}$ & $86.06 \pm 0.44$ & $\underline{86.45 \pm 0.44}$ & $79.68 \pm 0.25$ \\
ISOLET & $\boldsymbol{93.13 \pm 0.31}$ & $92.86 \pm 0.41$ & $\underline{89.05 \pm 0.54}$ & $77.49 \pm 0.75$ \\
HAR & $\boldsymbol{94.16 \pm 0.09}$ & $90.09 \pm 1.73$ & $42.31 \pm 9.01$ & $\underline{87.12 \pm 4.74}$ \\
MNIST & $\boldsymbol{93.91 \pm 0.20}$ & $89.72 \pm 3.92$ & $51.69 \pm 35.47$ & $\underline{91.22 \pm 1.18}$ \\
Fashion-MNIST & $83.42 \pm 0.86$ & $\boldsymbol{84.48 \pm 0.65}$ & $37.56 \pm 10.25$ & $\underline{80.81 \pm 1.73}$ \\
CIFAR-10 & $\boldsymbol{37.04 \pm 1.06}$ & $23.00 \pm 3.96$ & $12.67 \pm 0.86$ & $\underline{32.54 \pm 2.14}$ \\
\bottomrule

%% file: tables/gpu_state.tex
Letter & 126.00 & 31.50 & 23.62 & \textbf{5.44} & $23.17\times$ & $5.79\times$ & $4.34\times$ \\
ISOLET & 2103.00 & 525.75 & 394.31 & \textbf{126.56} & $16.62\times$ & $4.15\times$ & $3.12\times$ \\
HAR & 1815.00 & 453.75 & 340.31 & \textbf{112.31} & $16.16\times$ & $4.04\times$ & $3.03\times$ \\
MNIST & 3690.00 & 922.50 & 691.88 & \textbf{227.81} & $16.20\times$ & $4.05\times$ & $3.04\times$ \\
Fashion-MNIST & 3690.00 & 922.50 & 691.88 & \textbf{227.81} & $16.20\times$ & $4.05\times$ & $3.04\times$ \\
CIFAR-10 & 4986.00 & 1246.50 & 934.88 & \textbf{308.81} & $16.15\times$ & $4.04\times$ & $3.03\times$ \\
\bottomrule

%% file: tables/text_accuracy.tex
Tiny Shakespeare & 65 & 1.5 & $\boldsymbol{26.94 \pm 0.48}$ & $13.29 \pm 0.60$ & $23.82 \pm 0.34$ \\
TinyStories & 70 & 1.4 & $\boldsymbol{35.47 \pm 0.16}$ & $21.17 \pm 0.16$ & $31.61 \pm 0.68$ \\
PTB & 49 & 2.0 & $\boldsymbol{31.59 \pm 0.35}$ & $19.87 \pm 1.15$ & $28.78 \pm 0.43$ \\
text8 & 27 & 3.7 & $\boldsymbol{27.70 \pm 0.53}$ & $26.66 \pm 0.31$ & $26.05 \pm 0.23$ \\
enwik8 & 256 & 0.4 & $\boldsymbol{25.23 \pm 0.34}$ & $0.78 \pm 0.72$ & $20.26 \pm 0.44$ \\
\bottomrule

%% file: tables/text_state.tex
Tiny Shakespeare & 388.5 & 97.1 & \textbf{18.2} & $21.4\times$ & $5.3\times$ \\
TinyStories & 411.0 & 102.8 & \textbf{19.1} & $21.5\times$ & $5.4\times$ \\
PTB & 316.5 & 79.1 & \textbf{15.2} & $20.8\times$ & $5.2\times$ \\
text8 & 217.5 & 54.4 & \textbf{11.1} & $19.7\times$ & $4.9\times$ \\
enwik8 & 1248.0 & 312.0 & \textbf{54.0} & $23.1\times$ & $5.8\times$ \\
\bottomrule

%% file: tables/screening_generated.tex
Generated, flattened 1D index & 38 & 41 & 53 & 55 \\
Generated, row and column frequencies & 50 & --- & --- & 72$^{\dagger}$ \\
Dense table & \textbf{96} & \textbf{96} & \textbf{95} & \textbf{94} \\
\bottomrule

%% file: tables/screening_fp_snapshot.tex
4 & 65.7 & 94.2 & 97.7 & 97.8 & 97.8 \\
8 & 68.1 & 94.3 & 99.6 & 99.8 & 99.9 \\
16 & 68.1 & 95.8 & 99.6 & 99.9 & 100.0 \\
\bottomrule

%% file: tables/screening_fp_training.tex
Float (control) & $94.96 \pm 1.17$ \\
\midrule
16 & $95.04 \pm 0.71$ \\
12 & $95.07 \pm 1.41$ \\
10 & $94.89 \pm 0.25$ \\
8 & $95.00 \pm 0.74$ \\
6 & $94.81 \pm 0.78$ \\
4 & $94.81 \pm 0.26$ \\
\bottomrule

%% file: discussion.tex
\section{Discussion}
\label{sec:discussion}
The results raise four questions that final accuracies cannot answer. Why does an update with no history match one with two moment arrays in the controlled setting? Why does per-tensor 6-bit Adam fail badly on some datasets and not on others? Where does the remaining accuracy gap to float32 Adam come from? Why does the phase width affect accuracy so unevenly across tasks? The instrumented replays of \cref{sec:setup-diagnostics}, the intermediate systems of the screening study, and the update geometry of \cref{sec:method} answer the first three and constrain the fourth. \Cref{fig:dynamics} summarizes the replay measurements. A final subsection asks what the results imply for a physical implementation.

\begin{figure}[t]
\centering
\includegraphics[width=\textwidth]{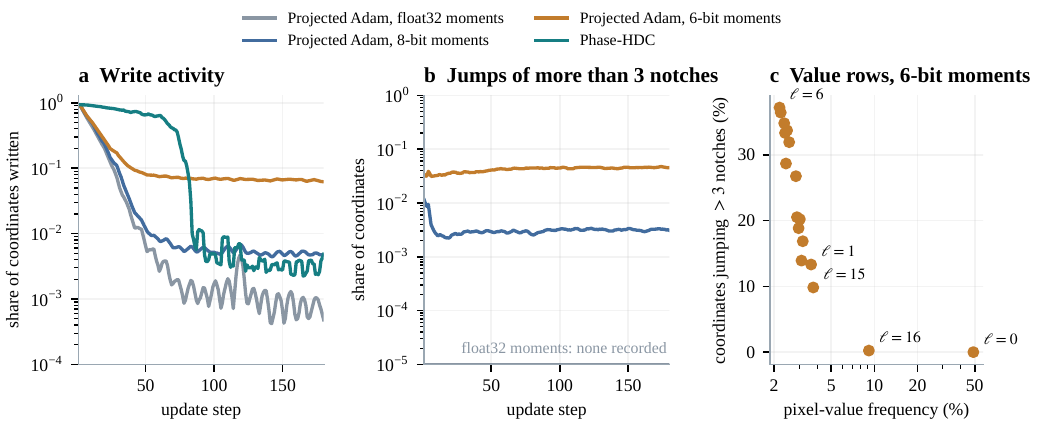}
\caption{\textbf{Update dynamics in the controlled comparison.} Instrumented replays of the 20 final runs, which reproduce the released checkpoints exactly; curves are means over five seeds, smoothed over five steps. (a)~Share of all phase coordinates whose stored index changes at each step. (b)~Share of coordinates displaced by more than three notches in one step; float32 moments produced no such jump. (c)~For projected Adam with 6-bit moments, each point is one row of the value table, that is, one pixel value $\ell$. Rarely observed values are displaced far more often (Spearman rank correlation $-0.98$ over 17 rows).}
\label{fig:dynamics}
\end{figure}

\subsection{Projected Adam as a History-Gated Discrete Update}
\label{sec:discussion-gating}
Projection onto the phase grid changes what Adam's moments do. At the first step, bias correction gives $m_1/(1-\beta_1)=g_1$ and $v_1/(1-\beta_2)=g_1^2$, so the moment ratio of \cref{eq:projadam} is
\begin{equation}
 r_1=\frac{g_1}{|g_1|+\eps_{\mathrm{A}}}=\sign(g_1)\bigl(1-O(\eps_{\mathrm{A}}/|g_1|)\bigr).
 \label{eq:firststep}
\end{equation}
With the selected learning rate $\eta=3$, the first projected step is therefore a three-notch sign step on every coordinate whose gradient exceeds $10^{-8}$. The replays confirm this: at step 1, 99.99\% of coordinates move by exactly three notches in all three Adam arms. Projected Adam begins as unthresholded sign descent with a larger step than \PHDC{}.

After the first step, rounding creates a dead zone. Coordinate $j$ moves at step $t$ only if $\eta\,|r_{t,j}|\geq\tfrac12$, and then moves by $\rnd(\eta r_{t,j})$ notches. Projected Adam is thus a gated discrete update, like \PHDC{}, and the two rules differ only in the statistic that opens the gate. Balles and Hennig interpret $r_t$ as a sign multiplied by a factor that shrinks when gradient noise dominates the gradient mean~\citep{balles2018}. In a stationary approximation where the bias-corrected moments track $\mathbb{E}[g]$ and $\mathbb{E}[g^2]=\mathbb{E}[g]^2+\mathrm{Var}[g]$, and with $S=|\mathbb{E}[g]|/\sqrt{\mathrm{Var}[g]}$ the signal-to-noise ratio of one coordinate,
\begin{equation}
 |r_t|\approx\frac{1}{\sqrt{1+S^{-2}}}<1,
 \qquad\text{so the gate opens when}\qquad
 S\gtrsim\frac{1}{\sqrt{4\eta^2-1}}.
 \label{eq:snrgate}
\end{equation}
For $\eta=3$ the gate requires $S\gtrsim0.17$, for $\eta=1$ it requires $S\gtrsim0.58$, and for $\eta=0.3$ it cannot open at all in the stationary regime, because $|r_t|$ would have to exceed $1.67$. This explains why all three Adam arms selected the largest learning rate on the grid: without a continuous accumulator, a small learning rate does not make updates smaller, it suppresses them, which is the stagnation that Gupta et al.\ attribute to round-to-nearest~\citep{gupta2015}. The boundary selection is therefore a property of projection, and an extended grid would probably favor still larger rates for the same reason.

The controlled validation grid tests this prediction directly. With $\eta_{\mathrm{enc}}=\eta_{\mathrm{cls}}=0.3$, projected Adam with float32 moments changed none of its 5{,}824 stored indices in 180 steps, and its validation accuracy of $9.7\%$ is that of the random initialization. With $\eta_{\mathrm{enc}}=0.3$ alone, the position and value tables stayed frozen while the class table learned, and validation accuracy reached $68.9\%$ to $70.8\%$. With $\eta_{\mathrm{cls}}=0.3$ alone, the class table stayed frozen at random phases while the encoder learned, and accuracy reached $85.8\%$ to $91.1\%$. A small learning rate does not slow projected Adam; it switches it off. The quantized-moment arms at $\eta=0.3$ reached $48.3\%$ and $43.6\%$ instead of chance. Their moment quantization errors occasionally inflate $|r_t|$ past the gate, which is the failure mechanism of \cref{sec:discussion-quantization} acting here in the opposite direction.

The two gates lead to different training dynamics (\cref{fig:dynamics}a). Projected Adam with float32 moments writes to every coordinate at step 1, then its write fraction decays steadily to $0.12\%$ of coordinates per step over steps 91 to 180. Each coordinate changes $11.7\pm0.3$ times during training, and the final training loss is $0.036\pm0.005$. \PHDC{} writes to $72\%$ of coordinates per step between steps 10 and 70. Its write fraction then falls below $5\%$ between steps 69 and 82 across seeds, as the training loss drops from $0.028$ (epoch 8) to $0.009$ (epoch 10), and stays near $0.4\%$ afterwards. Each coordinate changes $55.2\pm3.1$ times, and the final training loss is $0.0081\pm0.0004$, four times lower than that of any Adam arm. The SNR gate of Adam closes when minibatch gradients start to disagree, which happens while the training loss is still comparatively high. The magnitude gate of \PHDC{} closes only when gradients become small, which happens after the training set is nearly fit. The lower training loss does not translate into a significant test advantage, so the controlled result is parity of generalization reached by different routes.

The replays and the grid locate the function of the moments in this setting. On a phase grid without a latent accumulator, the moments mainly decide which coordinates move, and an instantaneous magnitude threshold makes a comparably useful decision without memory. The replacement is not free. The ratio $r_t$ is invariant to the scale of the gradient, whereas $\tau$ is measured in gradient units. \PHDC{} therefore needs a separate threshold for the class table, its selected thresholds differ across datasets by an order of magnitude (\cref{tab:settings}), and by \cref{eq:precision} its gate changes with the bit width. Scale invariance is what the moment arrays buy, and its absence is the main practical cost of $\Gamma=1$. A stateless gate can recover part of this invariance: a threshold set at each step from a quantile of $|g|$ within each table requires transient computation but no persistent state.

\subsection{Failure Mode of Per-Tensor Second-Moment Quantization}
\label{sec:discussion-quantization}
The replays expose a concrete failure of per-tensor affine moment quantization. The quantizer of \cref{app:quantizer} maps a whole tensor onto $2^b-1$ equal intervals between a tracked minimum $\mathrm{lo}$ and maximum $\mathrm{hi}$. A second-moment entry below $\mathrm{lo}+(\mathrm{hi}-\mathrm{lo})/(2(2^b-1))$ is stored at the lowest code and read back as $\mathrm{lo}$, which tracks the tensor minimum and is close to zero. The denominator of $r_t$ then becomes much smaller than the true $\sqrt{v_t}$. The first moment carries its own error of up to half an interval, set by the largest first moments in the tensor, and zero is generally not a representable value. The quotient $r_t$ can therefore be large, $\rnd(\eta r_t)$ becomes a multi-notch jump, and after reduction modulo $K$ the coordinate can land anywhere on the circle.

The measurements follow this chain table by table. In the value table $V$ with 6-bit moments, $31.0\%$ of second-moment entries sit at the lowest code, $7.7\%$ of first-moment entries change sign under quantization, and an average step displaces $22.1\%$ of coordinates by more than three notches. The same rate is $0.009\%$ in the position table $P$ and $0.70\%$ in the class table $\psi$, and $1.7\%$ in $V$ with 8-bit moments. \Cref{fig:dynamics}b shows that these jumps persist through training rather than fading after a transient. As a result, the 6-bit arm still writes to $6.7\%$ of coordinates per step late in training, compared with $0.12\%$ for float32 moments.

The concentration in $V$ follows from how the input distribution reaches each table. The gradient of a value row $V_{\ell,\cdot}$ sums contributions from every position at which value $\ell$ occurs, so its second moment grows with the frequency of $\ell$. In the digit training set, value 0 fills $49.0\%$ of pixel slots and value 16 fills $9.1\%$, while each of the other fifteen values fills between $2.2\%$ and $3.7\%$. Their second moments are small relative to the per-tensor maximum and fall below the first quantization level. \Cref{fig:dynamics}c shows the result: on an average step, the rows of rare values have $10\%$ to $37\%$ of their coordinates displaced by more than three notches, against $0.01\%$ for value 0 and $0.24\%$ for value 16, with a Spearman rank correlation of $-0.98$ between visit frequency and jump rate across the 17 rows. Position rows are immune because every position occurs exactly once in every example, so their second moments are balanced by construction. Class rows receive gradient from every example through the softmax and are nearly balanced.

The damage on digits is partial. The position table, which holds 70\% of the coordinates, is unaffected, the two dominant value rows are stable, and an average step leaves most coordinates of even the rarest rows in place. Test accuracy stays at $91.22\%$ in the controlled comparison, and the screening study shows the same pattern of a modest mean loss with inflated variability: Affine Adam-6 reached $93.15\%$ with a seed SD of $2.83$ points, three times that of \PHDC{}. We have no measurement that attributes this resilience to one of the factors above.

The mechanism makes a testable prediction for the exploratory evaluation. Per-tensor 6-bit moments should fail most severely when the value distribution is dominated by a few bins while many informative bins are rare, and when one shared value table serves many positions. Affine Adam-6 loses most of its accuracy on MNIST, Fashion-MNIST, CIFAR-10, and HAR. The three image datasets have large uniform backgrounds and fit this description; whether the discretized HAR features have a similarly skewed value distribution is not known from the records. The tuning records add that no learning rate rescued Affine Adam-6 on these four datasets (\cref{sec:results-affine}), as expected if the failure lies in the quantizer rather than in the step size. We have not computed the value-bin frequencies of the archived datasets or replayed those runs, and the archived arm also quantizes its parameters and uses a different class head, so the prediction remains untested there. The mechanism also indicates the remedy. Block-wise quantization~\citep{dettmers2022}, or simply one range per value row, gives rare rows their own scale and should remove the failure. The text study offers a second test of the mechanism, this time at 8 bits. In next-symbol prediction the same vocabulary indexes both the value table and the class table, so a skewed symbol distribution affects two tables at once. enwik8 is the most skewed case in this paper: its 256 byte values include frequent ASCII letters and markup alongside many rare bytes. Affine Adam-8 collapses on enwik8 and only there among the text corpora, while on text8, whose 27 symbols are all common, it stays within about one point of Adam-fp32. The three ASCII corpora with 49 to 70 symbols lie in between, with Affine Adam-8 losing 11.7 to 14.3 points against Adam-fp32. This ordering follows the vocabulary size and plausibly its skew, as the mechanism predicts, although we have not measured symbol frequencies or moment codes on these runs. It also shows that 8 bits do not protect per-tensor moments when the frequency range is wide enough.

The large gains of \PHDC{} over Affine Adam-6 and Affine Adam-8 are therefore gains over a specific quantizer construction, and they should not be read as gains over 8-bit adaptive optimization in general.

\subsection{Sources of the Accuracy Gap to Float32 Adam}
\label{sec:discussion-gap}
Adam-fp32 differs from \PHDC{} in three ways at once: it keeps continuous parameters, it keeps moments, and it has a free complex class head. The screening systems separate part of this bundle on digits (\cref{tab:screening}). A 6-bit integer encoder trained with \cref{eq:update}, in place of the continuous encoder of Adam-fp32 and with the free float32 head and its Adam moments kept, lowers accuracy from $96.48\%$ to $94.33\%$, a loss of $2.15$ points. A head constrained to unit-modulus 6-bit phases and stripped of its moments costs a further $0.63$ points, to $93.70\%$. Most of the gap on digits therefore arises in the encoder, where continuous phases moved by Adam are replaced by grid phases moved one notch at a time. The size of the gap also depends on the task. It averages $7.0$ points on the six image and tabular datasets, where ISOLET alone contributes $15.6$, and $3.3$ points on the five text corpora, where every corpus uses the same 64 context positions and the same dimension.

The controlled comparison indicates which part of that replacement matters. When Adam itself is restricted to the grid and loses its continuous copy, it no longer outperforms \PHDC{}: projected Adam with float32 moments reaches $90.78\%$ against $92.44\%$. The two digit studies differ in dimension ($D=64$ against $D=96$) and selection protocol, so their numbers cannot be subtracted. Their pattern is nevertheless consistent: the advantage of Adam-fp32 comes mainly from its continuous parameters, which can accumulate many small updates between grid points, and much less from its moments. In the language of \cref{sec:bg-state}, the master copy, not the optimizer's history of gradients, is the expensive state that buys accuracy.

At four bits the head becomes the bottleneck. The hybrid system with a 4-bit integer encoder reaches $94.89\%$, so one-notch training of a 4-bit encoder works on digits. \PHDC{} with a 4-bit class table falls to $84.81\%$. The class table enters every score directly: one notch on $\psi_{c,d}$ changes $z_c$ by $(e^{-ia\sigma}-1)\,h_d\overline{w_{c,d}}$, whose magnitude $2\sin(a/2)|h_d|$ is $39\%$ of the coordinate's whole contribution at $k=4$. A unit-modulus head also cannot weight coordinates by importance, which a free head does through its magnitudes. Both properties make a coarse phase-only head a blunt instrument, and they suggest a hybrid design for very low widths: a phase encoder trained without state and a small head with a few more bits.

\subsection{Phase Width as a Joint Control of Capacity and Optimization}
\label{sec:discussion-width}
\Cref{eq:precision} identifies two ways in which the bit width enters the optimizer: the step size $a$ and the angular gate $\tau/a$. The phase memory adds a third effect through its shared value table. One notch on $V_{\ell,d}$ rotates every bound phasor $u_{p,d}$ with $x_p=\ell$ at once. For a step $\sigma\in\{-1,1\}$,
\begin{equation}
 \Delta h_d=\bigl(e^{i a\sigma}-1\bigr)\sum_{p:\,x_p=\ell}u_{p,d},
 \qquad
 |\Delta h_d|\leq2\sin(a/2)\,n_\ell(x),
 \label{eq:sharedstep}
\end{equation}
where $n_\ell(x)$ counts the positions of $x$ that hold value $\ell$. The factor $2\sin(a/2)$ equals $0.390$ at $k=4$, $0.098$ at $k=6$, and $0.025$ at $k=8$. At four bits, one notch of a frequent value row can therefore move the bundle of an image with $F=784$ positions by up to about 300 units in a coordinate whose magnitude cannot exceed 784. The same step on Letter, with $F=16$, moves the bundle by at most about 6 units. This predicts that four-bit training is unstable when inputs have many positions, and the sweep agrees: Letter, the only dataset with few positions, reaches $69.63\%$ at $k=4$, whereas the five datasets with 561 to 1{,}024 positions remain between $14.67\%$ and $38.36\%$, with SDs up to $15.20$ points.

The same geometry limits the justification of the update at coarse widths. In index units, the curvature constant of \cref{eq:descent} scales as $\Lambda=a^2\Lambda_\theta$, where $\Lambda_\theta$ is the curvature in angle. With half as many notches, $a$ doubles, which doubles the first-order gain of a step and quadruples its second-order term, so the ratio of curvature penalty to predicted gain grows in proportion to $a$ and a one-notch move stops being a local move. At $k=2$, a step is a quarter turn and the linear model behind \cref{prop:local} carries little information about the actual change in loss, which is consistent with accuracies near the class-frequency level. The screening records support an optimization rather than a capacity explanation. A dense 2-bit table reached $96\%$ on digits in the generated-memory study (\cref{tab:generated}), so four phase levels can represent a good digit classifier. One-notch training at $k=2$ on the same task reached only $8.3\%$ to $10.0\%$ across three thresholds, near the $10\%$ chance level, with nearly every eligible coordinate changing at every step.

At $k=8$ the opposite constraints bind. Steps are small, but travel is not the limiting factor: Letter receives 1{,}375 updates, enough for a coordinate to traverse more than five full turns at $1.4^\circ$ per step. The angular gate, however, is four times stricter than at $k=6$ for the same index-space threshold, and the selected encoder threshold lies at the smallest grid value on five of six datasets. The accuracy losses at eight bits on the tabular datasets are therefore more plausibly an under-tuned gate than a property of the representation. The digit screening already demonstrates this effect: extending the threshold grid at $k=8$ raised accuracy from $86.22\%$ to $93.81\%$, level with $k=6$. A sweep that holds the angular threshold $\tau_\theta$ fixed across widths and extends the grid below its current minimum would settle the question on the six datasets.

\subsection{Logical State, Write Traffic, and Fixed-Point Arithmetic}
\label{sec:discussion-hardware}
The storage results concern logical state, and their practical value depends on how a training system is built. The most direct beneficiary is a device that must keep its learned tables resident while it trains through a small workspace, for example an embedded system that adapts a classifier to its user. In that setting the persistent arrays, not the transient computation, set the minimum memory, and $\Gamma=1$ means that a device able to store the model can also store everything it needs to keep training it.

Statelessness also changes the write pattern. \PHDC{} writes each parameter $55.2$ times over training, against $11.7$ to $24.0$ times for the projected Adam arms, because its gate stays open longer. Adam, however, rewrites both moment arrays at every step. Per coordinate and step, \PHDC{} writes $1.84$ bits on average, against $12.8$, $16.4$, and $64.4$ bits for projected Adam with 6-bit, 8-bit, and float32 moments. For memories in which writes dominate energy or wear, such as nonvolatile in-memory computing arrays, total write traffic favors the stateless rule while per-cell write counts on the parameter array favor Adam. Which matters depends on the device.

A practical implementation must also bound transient memory. Because \cref{eq:update} needs only $g_j$ to update $q_j$, gradients can be computed and applied row by row without materializing a full gradient array, and phases can be decoded through a lookup table of $K$ cosine and sine values.

The fixed-point screening addresses the arithmetic. The update consumes only the sign of each gradient and a comparison with the threshold, so it should tolerate rounding errors that perturb magnitudes without systematically reversing signs. The results fit this view. At one frozen state, 4-bit softmax arithmetic reverses about a third of all gradient signs, yet training with that arithmetic ends within one standard deviation of the float control. Two properties of the rule plausibly explain the difference. Rounding is most likely to reverse the sign of a gradient whose exact value is close to zero, and such a coordinate is usually below the threshold and does not move. A sign error that does pass the gate moves a coordinate by one notch, and if the errors are not biased in one direction, later steps undo it, as minibatch noise is undone in ordinary stochastic training. We did not measure the magnitude of the reversed gradients, so this explanation is a hypothesis. The simulation also shows where precision is needed: the head's softmax and logit arithmetic, not the trigonometric table, sets the sign agreement (\cref{tab:fpsnapshot}). An integer implementation would still have to fix accumulator widths and handle overflow and saturation, which the simulation does not model, as WAGE does for deep networks~\citep{wu2018}. The main experiments store codes in floating-point arrays and compute in floating point, so they demonstrate none of the memory savings physically.

\section{Limitations and Future Work}
\label{sec:limitations}

\subsection{Statistical Strength of the Controlled Evidence}
The central comparison uses one small dataset, one stratified example-level split, and five seeds, and its paired interval includes zero. It supports parity, not superiority. The screening studies, the six-dataset evaluation, and the text study extend the picture to stronger competitors and other tasks, but all three selected hyperparameters on the data they report, the text study searched only four configurations per system, and the screening code is not released. All Adam arms selected the largest learning rate on the grid, and \cref{eq:snrgate} predicts that larger rates could help them further. The six-dataset evaluation is broader but reuses evaluation labels for selection and compares different class heads, so it cannot isolate the optimizer~\citep{cawley2010}. A decisive study would hold the phase grid, head, initialization, and data partitions fixed across several datasets with held-out selection and writer- or speaker-disjoint splits. It would extend the Adam learning-rate grid upward and add three competitors that the present results single out: block-wise 8-bit Adam~\citep{dettmers2022}, Adam with one quantization range per table row, and Adafactor~\citep{shazeer2018}. For text, it would also report the accuracy of always predicting the most frequent symbol, a stronger reference than uniform guessing, and measure symbol frequencies to test the quantization mechanism directly. The instrumentation of \cref{sec:setup-diagnostics} should accompany such a study so that every accuracy difference comes with its mechanism.

\subsection{Logical State without a Packed Training Implementation}
All memory ratios describe ideal packed parameter and moment arrays. The code simulates low-bit codes in floating-point arrays and does not measure peak memory, latency, or energy. The fixed-point screening rounds intermediate values but simulates neither integer overflow nor saturation, and it ran on one small dataset whose code is not released. The paper therefore establishes an accounting opportunity, a faithful simulation of the update, and screening evidence that rounded gradients suffice, not an end-to-end integer system. The natural next step is a packed implementation on a memory-constrained target, such as a microcontroller, that streams row-wise gradients as described in \cref{sec:discussion-hardware} and reports measured peak memory, energy per update, and per-cell write counts alongside accuracy.

\subsection{Scale-Dependent Threshold and Restricted Model Family}
The threshold of \PHDC{} lacks the scale invariance of Adam's moment ratio, so it must be tuned per table, per dataset, and per bit width, and the phase-width sweep shows that a poorly matched gate can cost many points of accuracy. The experiments also cover only shallow item-memory classifiers with one magnitude readout. They do not test whether the learned vectors keep the unbinding and similarity properties that make random HDC codes compositional, and they do not compare against deep models at a matched resource budget. Two extensions address these limits without adding persistent state: a gate computed at each step from a quantile of the current gradient magnitudes in each table, and the application of moment-free one-notch updates to other discrete lookup structures, such as quantized embedding tables, together with direct tests of algebraic fidelity on the learned codes~\citep{snyder2026}.

%% file: appendix.tex
\section{Notation}
\label{app:notation}
\begin{table}[h]
\centering\small
\caption{Symbols used in the paper. Each symbol has one meaning throughout.}
\label{tab:notation}
\begin{tabularx}{\textwidth}{lX}
\toprule
Symbol & Meaning\\
\midrule
$x=(x_1,\ldots,x_F)$, $y$ & discretized input with values $x_p\in\{0,\ldots,B-1\}$; class label\\
$F$, $B$, $C$, $D$ & number of positions, value bins, classes; representation dimension\\
$p$, $\ell$, $c$, $d$, $j$ & indices of position, value bin, class, dimension, and concatenated coordinate\\
$k$, $K=2^k$, $a=2\pi/K$ & phase bit width, number of phase levels, angle of one notch\\
$\ZK$ & integers $\{0,\ldots,K-1\}$ with addition modulo $K$\\
$P$, $V$, $\psi$ & position, value, and class phase tables\\
$u_{p,d}$, $h_d$ & bound phasor of position $p$; bundled representation at coordinate $d$\\
$w_{c,d}$, $z_c$, $s_c$, $\hat y_c$ & class phasor, complex overlap, magnitude score, softmax probability\\
$T$, $\eps$ & softmax temperature; numerical stabilizer in $s_c$\\
$\mathcal{B}$, $\loss$ & minibatch; mean cross-entropy loss\\
$q\in\ZK^N$, $N$ & all phase indices concatenated; their number $D(F+B+C)$\\
$g$, $\delta$, $\tau_j$ & minibatch gradient in index units; integer move; write cost (threshold)\\
$\tau_{\mathrm{enc}}$, $\tau_{\mathrm{cls}}$ & thresholds for $(P,V)$ and for $\psi$\\
$\mathcal{A}$, $\Lambda$, $\Lambda_\theta$ & active coordinates; gradient Lipschitz constant in index and angle units\\
$R$ & write budget of the budgeted variant\\
$\theta=aq$, $g_\theta$ & coordinate in radians; gradient with respect to it\\
$\Gamma$ & state overhead (\cref{def:overhead})\\
$M_{\mathrm{phase}}$, $M_{\mathrm{proj}}$, $M_{\mathrm{Adam},b}$ & persistent state of \PHDC{}, projected Adam, archived Adam arms\\
$b$, $N_E$, $N_C$ & moment bit width; encoder and class coordinate counts\\
$m_t$, $v_t$, $r_t$ & first moment, second moment, bias-corrected moment ratio\\
$\beta_1$, $\beta_2$, $\eps_{\mathrm{A}}$, $\eta$ & Adam decay rates, stabilizer, learning rate ($\eta_{\mathrm{enc}}$, $\eta_{\mathrm{cls}}$)\\
$S$ & per-coordinate gradient signal-to-noise ratio\\
$\sigma$, $n_\ell(x)$ & sign of a one-notch move; number of positions of $x$ holding value $\ell$\\
$n$ & number of seeds in a sample standard deviation\\
$\xi$, $\xi^{\min}_p$, $\xi^{\max}_p$ & raw feature value; training minimum and maximum of feature $p$\\
$\mu$, $\mathrm{lo}$, $\mathrm{hi}$, $\rho$ & tensor being quantized; tracked range; range-tracking rate\\
\bottomrule
\end{tabularx}
\end{table}

\FloatBarrier
\section{Implementation and Protocol Details}
\label{app:protocol}

\subsection{Input Discretization}
For a tabular feature $p$ with training minimum $\xi^{\min}_p$ and maximum $\xi^{\max}_p$ (replaced by $\xi^{\min}_p+10^{-6}$ if the range is zero), a raw value $\xi$ is mapped to
\begin{equation}
 x_p=\clip\!\left(\left\lfloor\frac{\xi-\xi^{\min}_p}{\xi^{\max}_p-\xi^{\min}_p}\,(B-1)\right\rfloor,\,0,\,B-1\right).
\end{equation}
The same map is applied to evaluation examples. With this formula, the top bin is reached mainly by values at the fitted maximum; reproductions should not replace it by a histogram with $B$ equal intervals. MNIST and Fashion-MNIST map an intensity $\xi\in\{0,\ldots,255\}$ to $\lfloor16\xi/256\rfloor$. CIFAR-10 quantizes each color channel to four levels and combines them as $x_p=16x_p^{\mathrm{R}}+4x_p^{\mathrm{G}}+x_p^{\mathrm{B}}$, which gives 64 joint color values at each of 1{,}024 positions. No convolution or spatial augmentation is used.

\subsection{Exploratory Training Configurations}
For each Adam arm the encoder learning rate ranges over $\{0.1,0.3,0.5,1.0\}$ and the class learning rate over $\{0.03,0.1\}$. For \PHDC{} both thresholds range over $\{3\times10^{-5},10^{-4},3\times10^{-4}\}$. Adam uses $(\beta_1,\beta_2)=(0.9,0.999)$ and $\eps_{\mathrm{A}}=10^{-8}$, without weight decay or schedule. \Cref{tab:settings} lists the selected pairs. The phase-width sweep reuses the threshold grid at every width, and its six-bit column reuses the main records rather than repeating them.

\begin{table}[h]
\centering\small
\caption{Selected exploratory configurations as (encoder, class) pairs: learning rates for Adam, thresholds in phase-index units for \PHDC{}. Selection used the evaluation set on seed 0.}
\label{tab:settings}
\begin{tabular}{lrrrr}
\toprule
Dataset & Adam-fp32 & Affine Adam-8 & Affine Adam-6 & Phase-HDC-6\\
\midrule
\input{tables/gpu_settings}
\end{tabular}
\end{table}

In the exploratory scripts, the position and value tables of every arm, including \PHDC{}, are initialized as real numbers drawn uniformly from $[0,K)$, so the first gradient of \PHDC{} is evaluated before the first projection onto integers; every later state lies on the grid. The controlled implementation initializes all three tables with integer indices, which matches \cref{alg:phasehdc} from the first step. Adam-fp32 phases are reduced modulo $K$ after each update but not rounded, and the affine-quantized Adam phases follow their adaptive tensor ranges without explicit modular reduction. These differences are part of the recorded systems and are reported rather than corrected after the fact.

\subsection{Affine Moment Quantization}
\label{app:quantizer}
For a tensor $\mu$ and bit width $b$, the simulated quantizer stores integer codes
\begin{equation}
 Q(\mu)=\clip\!\left(\rnd\!\left[\frac{\mu-\mathrm{lo}}{\mathrm{hi}-\mathrm{lo}}\,(2^b-1)\right],\,0,\,2^b-1\right),
 \qquad
 \widehat{\mu}=\mathrm{lo}+\frac{\mathrm{hi}-\mathrm{lo}}{2^b-1}\,Q(\mu).
\end{equation}
The first write sets $(\mathrm{lo},\mathrm{hi})$ to the minimum and maximum of $\mu$. Later writes track the range by
\begin{equation}
 \mathrm{lo}\gets(1-\rho)\,\mathrm{lo}+\rho\min\mu,\qquad
 \mathrm{hi}\gets(1-\rho)\,\mathrm{hi}+\rho\max\mu,\qquad\rho=0.01,
\end{equation}
and a degenerate range is widened by $10^{-6}$. In projected Adam, the stored moments are dequantized, updated with the current gradient by \cref{eq:projadam}, used to form the candidate step, and requantized for the next minibatch. The parameters remain exact grid indices after rounding and modular reduction. The float32 arm follows the same sequence without moment quantization. The scheme uses neither stochastic rounding, block-wise ranges, nor a master-weight accumulator.

Two float32 range endpoints per quantized tensor would add 48 bytes for the six moment tensors of the controlled experiment and 96 bytes for the twelve quantized tensors of the archived arms. All tables omit these terms consistently, together with step counters and hyperparameters, so the reported figures are array-state accounting rather than whole-program storage.

\subsection{Controlled Protocol and Numerical Checks}
The controlled split uses random seed 20261005 for the 80/20 partition into a pool and a test set, and seed 20261006 for the 75/25 partition of the pool into training and validation sets, stratified by class at both stages. The released package contains the split indices, a hash of the dataset content, the full grid, the selected configurations, library versions, and all final checkpoints. The gradient check uses float64 central differences with displacement $10^{-5}$. A second check enumerates all $3^4=81$ moves on 100 random four-coordinate instances of \cref{eq:local} and confirms that \cref{eq:update} attains the minimum in each. A third check verifies wraparound at indices 0 and 63.

A first execution of the controlled experiment stored simulated quantized moment codes in float64 containers instead of float32. It was corrected for consistency across arms and rerun with the same split, grids, seeds, and budget; no grid was changed in response to test outcomes. All reported controlled values come from the corrected execution, and the instrumented replays of \cref{sec:setup-diagnostics} reproduce its checkpoints exactly.

\clearpage
\section{Additional Results}
\label{app:width}
\begin{table}[h]
\centering\small
\caption{Phase-width sweep of \PHDC{}: mean $\pm$ sample SD of exploratory accuracy (\%) over three seeds. The six-bit column is the main exploratory record. All widths share the threshold grid and update count; the angular step differs. Bold marks the best width per dataset.}
\label{tab:width}
\begin{tabular}{lrrrr}
\toprule
Dataset & $k=2$ & $k=4$ & $k=6$ & $k=8$\\
\midrule
\input{tables/phase_width}
\end{tabular}
\end{table}

\begin{table}[h]
\centering\small
\caption{Test accuracy (\%) of every controlled run. All seeds share the fixed data split; the projected Adam arms share the 6-bit parameters of \PHDC{} and differ only in moment precision. Bold marks the best method per seed.}
\label{tab:seeds}
\begin{tabular}{rrrrr}
\toprule
Seed & Phase-HDC & Adam, 6-bit moments & Adam, 8-bit moments & Adam, float32 moments\\
\midrule
\input{tables/cpu_seeds}
\end{tabular}
\end{table}

\FloatBarrier
\section{Hyperparameter Sensitivity}
\label{app:sensitivity}
\Cref{tab:sensitivity} reports, for every system of the six-dataset evaluation, the lowest and highest accuracy reached across its tuning grid on seed 0. The ranges show how much each system depends on its hyperparameters, and they show that the collapse of Affine Adam-6 on HAR, MNIST, Fashion-MNIST, and CIFAR-10 persists across every learning rate tried. \PHDC{} is not insensitive: on MNIST its thresholds span $45.0\%$ to $91.7\%$. In the controlled comparison, its nine configurations span $85.3\%$ to $94.7\%$ validation accuracy, whereas the projected Adam arms span $9.7\%$ to $92.2\%$ (float32 moments), $48.3\%$ to $94.7\%$ (6-bit), and $43.6\%$ to $93.6\%$ (8-bit), because their smallest learning rates freeze the parameters (\cref{sec:discussion-gating}).

\begin{table}[h]
\centering\small
\caption{Range of tuning-seed accuracy (\%) across each system's hyperparameter grid on six datasets (eight learning-rate pairs for Adam, nine threshold pairs for \PHDC{}).}
\label{tab:sensitivity}
\begin{tabular}{lrrrr}
\toprule
Dataset & Adam-fp32 & Affine Adam-8 & Affine Adam-6 & Phase-HDC-6\\
\midrule
\input{tables/gpu_sensitivity}
\end{tabular}
\end{table}

For the text study, \cref{tab:textsensitivity} gives the same ranges over its four-configuration grids. The collapse of Affine Adam-8 on enwik8 persists across all four learning-rate pairs, whose best reached $12.6\%$ on the tuning seed.

\begin{table}[h]
\centering\small
\caption{Range of tuning-seed accuracy (\%) across each system's four-configuration grid on five text corpora.}
\label{tab:textsensitivity}
\begin{tabular}{lrrr}
\toprule
Corpus & Adam-fp32 & Affine Adam-8 & Phase-HDC-6\\
\midrule
\input{tables/text_sensitivity}
\end{tabular}
\end{table}

The text study used learning-rate pairs $(\eta_{\mathrm{enc}},\eta_{\mathrm{cls}})\in\{(0.3,0.1),(0.5,0.03),(0.5,0.1),(1.0,0.03)\}$ for both Adam systems and thresholds $\tau_{\mathrm{enc}},\tau_{\mathrm{cls}}\in\{10^{-4},3\times10^{-4}\}$ for \PHDC{}. Its windows are sampled without replacement from the training and test segments with a fixed sampling seed, so all systems and seeds see the same windows. Tiny Shakespeare is read from the char-rnn repository, TinyStories from its public training split, PTB from the language-modeling training file distributed with a public LSTM implementation, and text8 and enwik8 from the benchmark archive; each source URL is recorded in the released script.

\FloatBarrier
\section{Provenance and Accounting of the Screening Studies}
\label{app:screening-accounting}
The screening studies of \cref{sec:setup-screening} are reported from the archived reports and summary tables of the project, which record means and standard deviations over five seeds but neither per-seed values nor the executable code. They are therefore not regenerated by the released scripts, and their selection protocol is treated as exploratory.

State figures in \cref{tab:screening} are recomputed from the architecture ($D=96$, $F=64$, $B=17$, $C=10$) with \cref{eq:memoryarchive}, which counts the free complex class head of the Adam systems as $2CD$ real coordinates. One archived report counts that head as $CD$ coordinates for Adam-fp32 and Affine Adam-8, which gives 102.4~KB and 25.6~KB instead of 113.6~KiB and 28.4~KiB, and lists 20.8~KB for Affine Adam-6 instead of 21.3~KiB. Under that accounting the saving of \PHDC{} at $k=6$ against Adam-fp32 is $16.0\times$ rather than $17.8\times$. We use the uniform accounting throughout so that ratios are comparable across studies. The accuracy reference of Adam-fp32 is the run paired with $k=6$, $96.48\pm0.56\%$; a repeat paired with $k=4$, selecting a different learning rate, reached $96.33\pm0.66\%$.

A comparison run before the matched protocol was adopted is superseded and excluded. It tuned three thresholds for \PHDC{} but used a single untuned learning rate for Adam, and it reported \PHDC{} at $k=4$ ahead of Adam ($95.4\%$ against $88.3\%$). Under equal-size grids, Adam-fp32 led every integer system instead. The episode illustrates the selection effects discussed by Cawley and Talbot~\citep{cawley2010} and motivated the matched grids used in every study reported here. The threshold sweep of that run at $k=2$ (accuracies of $8.3\%$ to $10.0\%$) is a property of \PHDC{} alone and is cited in \cref{sec:discussion-width}.

The generated-memory and fixed-point studies of \cref{sec:results-generated,sec:results-fixedpoint} followed the same screening setup. In the fixed-point study, an early version clipped logits to a range that excluded about $10\%$ of examples; the reported results use the corrected range, with which the 20-bit simulation matched the float reference to within $4\times10^{-8}$.

%% file: tables/gpu_settings.tex
Letter & $(0.3,\ 0.1)$ & $(0.3,\ 0.1)$ & $(0.5,\ 0.1)$ & $(10^{-4},\ 10^{-4})$ \\
ISOLET & $(1,\ 0.03)$ & $(1,\ 0.03)$ & $(1,\ 0.03)$ & $(3\times10^{-5},\ 10^{-4})$ \\
HAR & $(1,\ 0.03)$ & $(0.5,\ 0.03)$ & $(1,\ 0.1)$ & $(3\times10^{-5},\ 3\times10^{-5})$ \\
MNIST & $(0.1,\ 0.03)$ & $(0.5,\ 0.1)$ & $(1,\ 0.1)$ & $(3\times10^{-5},\ 3\times10^{-4})$ \\
Fashion-MNIST & $(0.3,\ 0.03)$ & $(0.3,\ 0.03)$ & $(1,\ 0.1)$ & $(3\times10^{-5},\ 3\times10^{-4})$ \\
CIFAR-10 & $(0.5,\ 0.03)$ & $(1,\ 0.1)$ & $(0.3,\ 0.03)$ & $(3\times10^{-4},\ 3\times10^{-4})$ \\
\bottomrule

%% file: tables/phase_width.tex
Letter & $3.75 \pm 0.16$ & $69.63 \pm 2.63$ & $\boldsymbol{79.68 \pm 0.25}$ & $77.24 \pm 0.19$ \\
ISOLET & $4.33 \pm 0.34$ & $34.27 \pm 15.20$ & $\boldsymbol{77.49 \pm 0.75}$ & $57.72 \pm 1.87$ \\
HAR & $18.23 \pm 0.02$ & $38.36 \pm 3.79$ & $\boldsymbol{87.12 \pm 4.74}$ & $84.38 \pm 0.71$ \\
MNIST & $10.02 \pm 1.23$ & $16.06 \pm 11.65$ & $91.22 \pm 1.18$ & $\boldsymbol{92.05 \pm 0.77}$ \\
Fashion-MNIST & $9.95 \pm 1.03$ & $17.90 \pm 9.61$ & $80.81 \pm 1.73$ & $\boldsymbol{81.97 \pm 0.31}$ \\
CIFAR-10 & $11.99 \pm 0.72$ & $14.67 \pm 3.57$ & $32.54 \pm 2.14$ & $\boldsymbol{34.02 \pm 1.16}$ \\
\bottomrule

%% file: tables/cpu_seeds.tex
1 & \textbf{92.78} & 90.00 & 91.39 & 92.22 \\
2 & \textbf{93.89} & 92.22 & 91.67 & 93.06 \\
3 & \textbf{92.22} & 91.39 & 90.56 & 87.78 \\
4 & 91.67 & \textbf{92.50} & 90.83 & 90.00 \\
5 & 91.67 & 90.00 & \textbf{92.78} & 90.83 \\
\midrule
Mean & \textbf{92.44} & 91.22 & 91.44 & 90.78 \\
\bottomrule

%% file: tables/gpu_sensitivity.tex
Letter & 86.8--89.3 & 77.7--86.6 & 70.2--86.1 & 64.2--80.5 \\
ISOLET & 77.1--93.5 & 62.9--92.5 & 51.8--88.6 & 54.7--78.4 \\
HAR & 90.3--94.2 & 69.8--89.7 & 13.8--72.0 & 75.5--90.0 \\
MNIST & 85.8--93.5 & 48.4--93.2 & 9.2--72.9 & 45.0--91.7 \\
Fashion-MNIST & 74.3--84.0 & 76.4--84.6 & 14.2--60.7 & 73.1--83.8 \\
CIFAR-10 & 23.5--38.7 & 10.1--28.1 & 10.0--13.9 & 21.7--31.7 \\
\bottomrule

%% file: tables/text_sensitivity.tex
Tiny Shakespeare & 23.6--27.7 & 7.8--16.7 & 18.1--24.4 \\
TinyStories & 32.1--35.6 & 6.0--22.3 & 26.4--32.2 \\
PTB & 28.1--31.7 & 10.6--23.1 & 24.1--29.7 \\
text8 & 25.7--27.8 & 24.3--27.4 & 23.7--26.0 \\
enwik8 & 22.4--25.1 & 0.0--12.6 & 14.4--21.0 \\
\bottomrule